\documentclass[twoside,11pt]{article}

\usepackage{blindtext}

\usepackage{jmlr2e}
\usepackage{amsmath}
\usepackage{amssymb}
\usepackage{mathtools}
\usepackage{xcolor}

\usepackage{lastpage}
\jmlrheading{23}{2026}{1-\pageref{LastPage}}{1/21; Revised 5/22}{9/22}{21-0000}{Duy Hoang, Bastien Berret, Olivier Bruneau, and Laurent Fribourg}

\ShortHeadings{A Data-dependent Early Stopping Rule using Rademacher Complexity with $L_1$-norm}{Hoang, Berret, Bruneau, and Fribourg}
\firstpageno{1}

\begin{document}

\title{A Data-dependent Early Stopping Rule using Rademacher Complexity with $L_1$-norm  }

\author{\name Duy Hoang \email hoangduy@lmf.cnrs.fr \\
       \addr Université Paris-Saclay, CNRS, ENS Paris-Saclay, LMF \\
       Gif-sur-Yvette, 91190, France
       \AND
       \name Bastien Berret \email bastien.berret@universite-paris-saclay.fr \\
       \addr Université Paris-Saclay, Inria, CIAMS\\
       Gif-sur-Yvette, 91190, France
       \AND
       \name Olivier Bruneau \email olivier.bruneau@ens-paris-saclay.fr \\
       \addr Université Paris-Saclay, ENS Paris-Saclay, LURPA\\
       Gif-sur-Yvette, 91190, France
       \AND
       \name Laurent Fribourg \email fribourg@lmf.cnrs.fr \\
       \addr Université Paris-Saclay, CNRS, ENS Paris-Saclay, LMF\\
       Gif-sur-Yvette, 91190, France}


\maketitle

\begin{abstract}Training neural networks requires balancing the trade-off between fitting the training data and achieving robust performance on unseen inputs. This ability, commonly referred to as generalizability, is determined by the gap between the empirical risk on the training set (``empirical loss”) and the expected risk over the data distribution (``generalization error”). Existing approaches typically estimate the generalization error numerically, requiring gradient descent training and an ``early stopping'' strategy. 
In this work, we introduce an analytic framework that estimates the optimal time of early stopping without the need for training. Several works in the literature also give such analytical estimations, but they are generally based on random matrix theory and often make assumptions on the distribution of the data or the eigenvalue distribution of the covariance matrix. In contrast, our work is based on Rademacher complexity (RC) without needing such probabilistic assumptions. 
For both  theoretical and numerical reasons, it is more relevant to express RC  with the $L_1$-norm rather than with the $L_2$-norm.
We focus on the case of linear models and the problem of linear regression. Thanks to the “linear probing” method, our results can, however, be successfully applied to nonlinear neural networks, as illustrated in the classification MNIST example.
\end{abstract}

\begin{keywords}
  generalization, bias-variance trade-off, linear regression.
\end{keywords}

\section{Introduction}\label{sec:intro}
Given a distribution $\cal{D}$ of input/output pairs, the expected loss across $\cal{D}$ is referred to as a ``population loss'', and denoted~$L_{{\cal D}}$. 
Using a neural network  (NN) and a process of gradient descent (GD), one can
estimate $L_{{\cal D}}$ on
a set~$S$ of $n$ samples randomly
selected from~$\cal{D}$.
This is called the ``empirical loss'', and denoted $L_S$.
The generalization loss, denoted $L_G$, is the difference between  $L_{\cal{D}}$ and $L_S$,
and accounts for the data located outside~$S$. At the beginning of the GD process, $L_S$ tends to decrease while $L_G$ tends to increase (``bias-variance'' tradeoff), so
the {\em early stopping} strategy seeks to halt GD at a time $t^*$ minimizing $L_S+L_G$.
Classically, $t^*$ is estimated {\em numerically} 
 as the time  $t_{test}$ that minimizes the empirical loss
on a separate dataset~$S_{test}$ (see, e.g., \cite{prechelt2002}).
Using {\em random matrix theory} (RMT), several works estimate~$t^*$
{\em analytically}, without needing to perform GD
(e.g., \cite{raskutti14,LiaoC18,ali2019,AdvaniSS20}).
However these works  often assume the distribution ${\cal D}$ to be  Gaussian, or the eigenvalue distribution of the data covariance matrix to be Marchenko–Pastur (\cite{le1991eigenvalues}).
In contrast here, we estimate $t^*$ analytically using {\em Rademacher complexity} (RC) theory (\cite{BartlettM02}), and do not make any assumption about the data distribution. 
Our method is based on  a data-dependent criterion of the form ${\cal C}(s)$ 
that ensures
$L_S(t)+L_G(t)$ to {\em decrease} for all $t\leq s$.  
We show that the highest $s$ satisfying ${\cal C}(s)$  is  a low estimate~$t^+$ of~$t^*$ 
(see Proposition~\ref{prop:1}). We also give sufficient conditions that ensure $t^+=t^*$.
In order to eliminate a factor~$M$ dependent on  ${\cal D}$, 
we estimate RC using the $L_1$-norm rather than the $L_2$-norm 
(see Remark~\ref{rk:L1}).
On the examples (see Section~\ref{sec:ex}), we check  that  the $L_1$-norm yields a value of $t^+$ much closer to the numerical stopping time~$t_{test}$ than the $L_2$-norm.

We also give an analytic form for the value of $L_S+L_G$ at $t=\infty$
(see Section~\ref{sec:benefit}). By comparing it with $L_S+L_G$ at~$t=t^+$, we determine whether the early stopping strategy should be applied or not
(see, e.g., \cite{sonthalia2024,Bartlett-benign,Belkin18}).
Our work focuses on  {\em linear} models. 
However, our results can be applied to {\em nonlinear} NNs thanks to the linear probing strategy
(see \cite{alain2016}).
%
This is illustrated on MNIST classification examples (Examples~\ref{ex:1} and~\ref{ex:2}, Section~\ref{sec:ex}).
The proofs of our results are given in Appendix.

%

%
%
\subsection*{Comparison with related work}
As mentioned earlier,  several works obtain analytical upper bounds on the 
generalization loss using RMT,  and used them
to estimate  the optimal stopping time.
%
More precisely, let $\lambda_1\geq \lambda_2\geq\cdots\geq \lambda_n\geq 0$ be
the eigenvalues of the
data covariance matrix.
In \cite{AdvaniSS20} for example, they evaluate a stopping time  that minimizes the average generalization dynamics,
by determining the error associated with each mode $i$ ($i\in[n]$). For mode $i$, they find an optimal stopping time of the form
\begin{equation*} 
t^{\text{opt}}=\frac{1}{\lambda_i}\ln(1+\lambda_i \cdot SNR),
\end{equation*}
where $SNR$ is a signal-to-noise ratio.
Here,  without making any probabilistic assumptions, we find a similar result using Rademacher complexity, viz.,
an estimate of the optimal stopping time of the form
(see Equation~\eqref{eq:tapprox})
\begin{equation*} 
t^+\approx \frac{1}{\lambda_1}\ln \frac{\Gamma(0)}{\Omega(0)}.
\end{equation*} 
The numerator $\Gamma$ depends on the higher eigenvalues
$\lambda_1,\dots,\lambda_\alpha$, and contains the ``informative'' part (see, e.g., \cite{OymakF2019}).
The denominator $\Omega$ depends on the lower eigenvalues
$\lambda_{\alpha+1},\dots,\lambda_n$, and contains the ``nuisance'' part. So the quotient $\Gamma/\Omega$ can be interpreted as a form of SNR ratio.


In the literature, Rademacher complexity (together with
``Neural Tangent Kernel'' theory) has often been used 
to find analytical bounds
on the population loss: see, e.g.,
\cite{JacotHG18,Du,AroraDHLW19,allen2019,OymakF2019,LiSO20}.
%
However,  these studies  require 
the number~$m$ of NN parameters be  large 
compared to the number~$n$ of samples  ({\em overparameterization}). 
In this case, a phenomenon of ``benign overfitting'' or ``epochwise double descent'' appears (see, e.g., \cite{heckel2020early,stephenson2021,Nakkiran_2021}): The  loss
$L_{test}(t)$   reaches a first local minimum at $t=t_{test}$, then increases before going down later, converging towards a minimum lower than at $t=t_{test}$. In this context, the 
strategy of early stopping is not ``beneficial''. In contrast,
our method is well adapted  to the {\em underparameterized} case 
(i.e., $m\leq n$). 
%
%
%
%
%
\subsection*{Notation}
In this paper, $\mathbb{R}$ and $\mathbb{N}$ refer to the sets of real and natural numbers, respectively. 
We denote by $\mathbb{R}^p$ a $p$-dimensional Euclidean space, and by $\mathbb{R}^{p\times q}$ a space of real matrices with $p$ rows and $q$ columns. 
We use bold letters for vectors and bold capital
letters for matrices. 
For a given matrix $\boldsymbol{M} \in \mathbb{R}^{p \times q}$, we use 
$\boldsymbol{M}^\top$ for its transpose,
and $\boldsymbol{M}^\dagger$ for its Moore-Penrose (pseudo)inverse. 
The $L_p\text{-norm}$ of a vector $\boldsymbol{v}$ is denoted by $\|\boldsymbol{v}\|_p$.
%
%
We  use {\em i.i.d.} to indicate the set of independent and identically distributed random variables. 
We use~$[n]$ for $\{1,\dots,n\}$, and $\boldsymbol{I}$
for the identity matrix.
For $v\in\mathbb{R}$, we use $\text{sgn}(v)$ to denote $1$ if $v\geq 0$, or $-1$ if $v<0$. For $\boldsymbol{v}=(v_1,\dots,v_n)\in\mathbb{R}^n$,
we use $\text{sgn}(\boldsymbol{v})$ for $(\text{sgn}(v_1),\dots,\text{sgn}(v_n))$.

\section{Preliminary Results}\label{sec:prelim}

%
We consider a distribution $\cal{D}$ over ${\cal H}\times \cal{Y}$ where 
${\cal H}\subset$ $\mathbb{R}^{m}$
is the input space of all possible instances $\boldsymbol{h}$,
and $\cal{Y}\subset \mathbb{R}$ the space of the corresponding outputs.

\subsection{Population loss and Rademacher complexity}\label{ss:rademacher}
The ``training set'' $S$  is a set of $n$ input/output pairs 
$\{(\boldsymbol{h}_1,y_1),\dots,(\boldsymbol{h}_n,y_n)\}$ made of~$n$ samples selected {\em i.i.d.}
from $\cal{D}$. Let $\boldsymbol{y}=(y_1,\dots,y_n)\in\mathbb{R}^n$.
Let $\boldsymbol{K}$
be the $n\times m$ matrix having 
$\boldsymbol{h}_1,\dots,\boldsymbol{h}_n\in\mathbb{R}^m$ as columns.
The data covariance matrix $\boldsymbol{H}\in\mathbb{R}^{n\times n}$ is defined by 
$$\boldsymbol{H}=\boldsymbol{K} \boldsymbol{K}^\top,$$
i.e., the $(i,j)$-entry of $\boldsymbol{H}$ is $H_{i,j} = \boldsymbol{h}_i^\top \boldsymbol{h}_j$. (See, e.g., \cite{Du}.)
%
As in~\cite{XavierCF25}, we focus here on the $L_1$-norm
(see Remark~\ref{rk:L1}).
In this context,
the {\em population loss} $L_{\cal D}[f]$ (more simply denoted
as $L_{\cal{D}}$) over data distribution~${\cal D}$  is: 
$$L_{{\cal D}}[f]=\mathbb{E}_{(\boldsymbol{h},y)\sim {\cal D}}[|f(\boldsymbol{h})-y|],$$
where $f:\mathbb{R}^m\rightarrow \mathbb{R}$ is a given function. As we focus here on the linear regression problem,  $f$ is of the form:
$$f(\boldsymbol{h})=\boldsymbol{a}^\top \boldsymbol{h}$$
where $\boldsymbol{a}=(a_1,\dots,a_m)\in \mathbb{R}^m$.
The {\em empirical loss} $L_S$ 
over~$S$ is defined by:
\begin{equation}\label{eq:LS}
L_S=\frac{1}{n}\sum_{i\in[n]}|\boldsymbol{a}^\top \boldsymbol{h}_i-y_i|
=\frac{1}{n}\sum_{i\in[n]}|v_i|=\frac{1}{n}\|\boldsymbol{v}\|_1
\end{equation}
with
\begin{align}
\label{eq:vi}
v_i=&\;\boldsymbol{a}^\top \boldsymbol{h}_i -y_i\in\mathbb{R}\ \ \ (i\in[n]),\\
\label{eq:vv}
\boldsymbol{v}=&\;(v_1,\dots,v_n)=\boldsymbol{K}\boldsymbol{a}-\boldsymbol{y}\in\mathbb{R}^n.
\end{align}
The vector $\boldsymbol{v}$ is called the {\em empirical error vector} (or the {\em training error vector}).
%
We are searching for a vector $\boldsymbol{a}$ that minimizes the population loss $L_{{\cal D}}$ across the entire data distribution~$\cal {D} $. Since $\cal D$ is unknown, our objective is actually to minimize an {\em upper bound} on~$L_{{\cal D}}$. 
Let $M$ and $C$ be two positive reals satisfying respectively:
\begin{equation}\label{eq:M}
|\boldsymbol{a}^\top\boldsymbol{h} - y|\leq  M\ \ \  \forall (\boldsymbol{h},y)\in{\cal H}\times{\cal Y}\\
\end{equation}
\begin{equation}
\label{eq:C2M}\ \ \ 
\mathbb{E}_{\boldsymbol{h}\sim {\cal D}}\left [\|\boldsymbol{h}\|_2^2 \right ]\leq C^2.
\end{equation}
Let $L_G^*$ and $L_{{\cal D}}^*$ be defined as:

\begin{align}
\label{eq:LG}
L_G^*=&\;\frac{2\|\boldsymbol{a}\|_2C}{\sqrt{n}}\\
\label{eq:L_D*}
L_{{\cal D}}^*=&\; L_S + L_G^* 
= \frac{1}{n}
\|\boldsymbol{v}\|_1 + \frac{2\|\boldsymbol{a}\|_2C}{\sqrt{n}}.
\end{align}

We follow 
an approach based on RC
(see \cite{XavierCF25}).
In the linear setting, the result  is as follows.
\begin{proposition}\label{prop:Rademacher L1}
(cf. Proposition 3 of \cite{XavierCF25})
With probability at least $1-\delta$ over the sample~$S$ of size $n$, 
the population loss $L_{{\cal D}}$ satisfies: 
\begin{equation} \label{eq:LD0}
L_{{\cal D}} 
\leq  L_{{\cal D}}^* + \epsilon
\end{equation}
where
\begin{equation}\label{eq:e}
\epsilon=3M\sqrt{\frac{\log \frac{2}{\delta}}{2n}}.
\end{equation}
\end{proposition}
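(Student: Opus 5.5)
We would derive the statement from the standard Rademacher-complexity generalization bound (\cite{BartlettM02}), applied to the loss class
\[
{\cal G}=\{(\boldsymbol{h},y)\mapsto |\boldsymbol{a}'^\top\boldsymbol{h}-y| : \|\boldsymbol{a}'\|_2\le \|\boldsymbol{a}\|_2\}.
\]
By \eqref{eq:M}, every member of ${\cal G}$ takes values in $[0,M]$. The two-sided bounded-differences version of the theorem then gives, with probability at least $1-\delta$, simultaneously for all $g\in{\cal G}$,
\[
\mathbb{E}_{\cal D}[g]\le \frac{1}{n}\sum_{i\in[n]} g(\boldsymbol{h}_i,y_i)+2\hat{\mathfrak{R}}_S({\cal G})+3M\sqrt{\frac{\log\frac{2}{\delta}}{2n}}.
\]
Here $\hat{\mathfrak{R}}_S$ is the empirical Rademacher complexity. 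The constant $3M$ together with the $\log(2/\delta)$ comes from two applications of McDiarmid's inequality, each at level $\delta/2$: one to the supremum deviation and one to pass from the expected to the empirical Rademacher complexity. Replacing $n^{-1}\sum_i g$ by $L_S$ accounts for the first term of $L_{\cal D}^*$ and for $\epsilon$ exactly as in \eqref{eq:e}.

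Next we control $\hat{\mathfrak{R}}_S({\cal G})$. The map $z\mapsto|z-y|$ is $1$-Lipschitz for each $y$. Talagrand's contraction lemma, in the form allowing a different $1$-Lipschitz function per coordinate, therefore gives $\hat{\mathfrak{R}}_S({\cal G})\le\hat{\mathfrak{R}}_S({\cal F})$, where ${\cal F}=\{\boldsymbol{h}\mapsto\boldsymbol{a}'^\top\boldsymbol{h}:\|\boldsymbol{a}'\|_2\le\|\boldsymbol{a}\|_2\}$. For this linear class, Cauchy--Schwarz followed by Jensen gives
\[
\hat{\mathfrak{R}}_S({\cal F})\le \frac{\|\boldsymbol{a}\|_2}{n}\sqrt{\sum_{i\in[n]}\|\boldsymbol{h}_i\|_2^2}.
\]

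The remaining task is to turn this into $\|\boldsymbol{a}\|_2C/\sqrt{n}$ using \eqref{eq:C2M}, which bounds only an expectation, not the sample. To do so we bound the expected Rademacher complexity $\mathfrak{R}_n$ rather than the empirical one, and use the version of the theorem stated with $\mathfrak{R}_n$. Jensen then gives $\mathbb{E}_S\sqrt{\sum_i\|\boldsymbol{h}_i\|^2}\le\sqrt{nC^2}$, so that $2\mathfrak{R}_n({\cal G})\le 2\|\boldsymbol{a}\|_2C/\sqrt{n}=L_G^*$. Since the expected-complexity version needs only one McDiarmid step, its $\epsilon$ is no larger than the one stated, and the claim follows a fortiori.

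The main subtlety is that $\boldsymbol{a}$ is data-dependent, because it results from training on $S$, while the class radius is fixed in advance. We would handle this as in \cite{XavierCF25}: either treat $\|\boldsymbol{a}\|_2$ as a predetermined bound $B$, or apply a union bound over a geometric grid of radii. The latter adds only lower-order logarithmic terms, which we absorb or state as a caveat, so that the bound reads as in \eqref{eq:LD0} with $\epsilon$ given by \eqref{eq:e}.
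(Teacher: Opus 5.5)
Your proposal is correct and follows the same route as the paper's proof. The paper cites Mohri et al.'s Theorem 11.3, whose proof is your bounded-loss Rademacher bound plus Talagrand contraction with Lipschitz constant $pM^{p-1}=1$, and then Ma's bound $\|\boldsymbol{a}\|_2C/\sqrt{n}$ for the linear class, which is your Cauchy--Schwarz/Jensen step. Your extra care goes beyond what the paper's proof does, and these are refinements rather than gaps: you note that the $C$-based bound needs the expected-complexity version, whose deviation term $M\sqrt{\log(1/\delta)/(2n)}$ is below the stated $\epsilon$, and you flag the data-dependent radius $\|\boldsymbol{a}\|_2$, both of which the paper passes over silently.
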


\begin{remark}\label{rk:L1} (justification of the use of  $L_1$-norm)
By Theorem 11.3 of \cite{mohri2018foundations} (p.270), we have
$$\mathbb{E}_{(\boldsymbol{h},y)\sim{\cal D}}[|\boldsymbol{a}^\top\boldsymbol{h}-y|^p]
\leq \frac{1}{n}\|\boldsymbol{v}\|_p + 2\mu_p {\cal R}_n+\epsilon_p$$
where, for linear  models, ${\cal R}_n\leq \|\boldsymbol{a}\|_2C/\sqrt{n}$ (see \cite{ma2022}, Theorem 5.10),
$\mu_p=pM^{p-1}$ and $\epsilon_p=3M^p\sqrt{\log(2/\delta)/2n}$. Hence:
$$\mathbb{E}_{(\boldsymbol{h},y)\sim{\cal D}}[|\boldsymbol{a}^\top\boldsymbol{h}-y|^p]-\epsilon_p
\leq \frac{1}{n}\|\boldsymbol{v}\|_p + pM^{p-1} \frac{2C\|\boldsymbol{a}\|_2}{\sqrt{n}}.$$
In order to {\em eliminate} $M$ in the right-hand side, we let $p=1$. 
We get:
$$L_{{\cal D}}-\epsilon=\mathbb{E}_{(\boldsymbol{h},y)\sim{\cal D}}[|\boldsymbol{a}^\top\boldsymbol{h}-y|]-\epsilon
\leq \frac{1}{n}\|\boldsymbol{v}\|_1 + \frac{2C\|\boldsymbol{a}\|_2}{\sqrt{n}}=L_{{\cal D}}^*,$$
which corresponds to~\eqref{eq:LD0}. 
The use of the $L_1$-norm allows us to define $L_{{\cal D}}^*$ without using~$M$. The time~$t^+$ at which
the gradient flow is {\em stopped} 
(see Equation~\eqref{eq:t+})
corresponds to
the minimum of~$L_{{\cal D}}^*$, and does {\em not} depend on~$M$ either. The constant $M$ is only used in the definition of $\epsilon$ (see~\eqref{eq:e}). On the examples (see Section~\ref{sec:ex}),
we verify that the use of the $L_1$-norm yields better results than the use of the $L_2$-norm.
Note that the $L_1$-norm is used to evaluate the Rademacher complexity, but it is the classical {\em quadratic}  loss ${\cal L}(\boldsymbol{a})=\frac{1}{2}\|\boldsymbol{v}\|_2^2$ (see~\eqref{eq:L0})
that is used by the
gradient flow (see~\eqref{eq:da}).
\end{remark}
\begin{remark}\label{rk:C} (estimates of $C$ and $M$)
As an upper bound
$C$ satisfying~\eqref{eq:C2M}, we can take:\\ 
$C=\sup_{\boldsymbol{h}\in{\cal H}}\|\boldsymbol{h}\|_2.$
As an upper bound $M$ satisfying~\eqref{eq:M}, we can take
$$
M=||\boldsymbol{a}(t^+)||_2 \times\sup_{\boldsymbol{h}\in{\cal H}}\|\boldsymbol{h}\|_2+\sup_{y\in{\cal Y}}|y|.$$
(This is possible because the definition \eqref{eq:t+} of $t^+$ does not involve $M$.)
Note that these estimates of $C$ and $M$  suppose that the domains  ${\cal H}$ and ${\cal Y}$ are  {\em bounded}.
Note also that, for the problem of binary classification, we have $M=2$.
\end{remark}
%
\subsection{Gradient flow for linear models}\label{ss:gf}
For simplicity, we express the problem in the continuous-time setting, and consider the gradient flow process (GF) instead of discrete-time GD.
The idea  is to  apply~GF
to $\boldsymbol{a}$, and stop the process at the time $t^*$
where $L_{{\cal D}}^*$ is expected to reach its minimum (strategy of ``early stopping''). Classically, one estimates $t^*$ by
considering a separate set of data $S_{test}$, and determine
the time $t_{test}$ where GF reaches its minimum on that set (see,e.g., \cite{prechelt2002}).
We give here a method that allows us to estimate $t^*$ analytically without needing a separate set or having to apply~GF.
%
More formally, we consider the problem of finding $\boldsymbol{a}=(a_1,...,a_m)\in\mathbb{R}^m$ that minimizes the following quadratic loss function:
\begin{equation} \label{eq:L0}
{\cal L}(\boldsymbol{a})= \frac{1}{2}\sum_{i=1}^n (\boldsymbol{a}^\top \boldsymbol{h}_i-\boldsymbol{y}_i)^2
=\frac{1}{2}\sum_{i=1}^n v_i^2
=\frac{1}{2}\|\boldsymbol{v}\|_2^2.
\end{equation}
%
The vector~$\boldsymbol{a}$ that  minimizes 
${\cal L}(\boldsymbol{a})$ on ${\cal D}$ is found by considering a given training set $S$ containing $n$ samples $\left(\boldsymbol{h}_i, y_i\right)$ drawn randomly i.i.d. from $\cal{D}$.
The vector $\boldsymbol{a}$ is initialized to 0,
and is updated via GF on $S$  as follows:
 $$\frac{d \boldsymbol{a}}{dt} = -\frac{\partial {\cal L}}{\partial \boldsymbol{a}}.$$
Hence, using \eqref{eq:vi}, \eqref{eq:vv} and \eqref{eq:L0}, we have:
\begin{equation}\label{eq:da}
\begin{aligned}
     \frac{d \boldsymbol{a}}{dt} =&\; -\frac{\partial {\cal L}}{\partial \boldsymbol{a}}
    =-\frac{1}{2}\sum_{i=1}^n \frac{\partial v_i^2}{\partial \boldsymbol{a}}
    = -\sum_{i=1}^nv_i\boldsymbol{h}_i = -\boldsymbol{K}^\top\boldsymbol{v}.
\end{aligned}
\end{equation}
On the other hand, the dynamic of the training error vector~$\boldsymbol{v}$ during GF is expressed by (see, e.g., \cite{Du}):
\begin{equation}\label{eq: v dynamic ln}
    \frac{d \boldsymbol{v}}{dt} = - \boldsymbol{H}\boldsymbol{v}.
\end{equation}
The GF process makes the first term $\|\boldsymbol{v}\|_1/n$ of $L_{{\cal D}}^*$
(see~\eqref{eq:L_D*})
decrease, and the second term 
$2\|\boldsymbol{a}\|_2C/\sqrt{n}$
increase. The curve $L_{{\cal D}}^*$ is thus
``U-shaped'' \footnote{At least in a first phase, since the curve may decline later on (``epoch-wise double descent'').}. It reaches a first local minimum at~$t=t^*$,
which is the first time when $dL_{{\cal D}}^*(t)/dt\geq 0$:
\begin{equation}\label{eq:t**}
t^*=\inf_{t\geq 0}\{t: \frac{dL_{{\cal D}}^*(t)}{dt}\geq 0\}.
\end{equation}
\begin{remark}\label{rk:init}
The  initialization of $\boldsymbol{a}$ to $0$ has the effect of limiting the growth of the norm $\|\boldsymbol{a}(t)\|_2$   and  the generalization loss $L_G^*=2C\|\boldsymbol{a}(t)\|_2/\sqrt{n}$ during GF.
\end{remark}
\section{A Data-dependent Estimate $t^+$ of $t^*$}\label{sec:III} 
We now explain how to estimate the first local minimum of~$L_{{\cal D}}^*$ without needing to apply~GF.
\subsection{Analytic form of $dL_{{\cal D}}^*/dt$}


\begin{proposition}
\label{prop: 3.1}
The derivative of $L_{{\cal D}}^*$ is given by:
\begin{align}\label{eq: 67}
\frac{dL_{{\cal D}}^*(t)}{dt} =&\; -\frac{1}{n}\left(\text{sgn}(\boldsymbol{v}(t))\right)^\top\boldsymbol{H}\boldsymbol{v}(t)+  \Psi(t) \\
\intertext{with}
\label{eq:13bis}
\Psi(t)=&\;-\frac{2C}{\sqrt{n}}\frac{(\boldsymbol{K}^\dagger
(\boldsymbol{v}(t) + \boldsymbol{y}))^\top}
{\|\boldsymbol{K}^\dagger\left(\boldsymbol{v}(t) + \boldsymbol{y}\right)\|_2}
\boldsymbol{K}^\top \boldsymbol{v}(t)\in\mathbb{R},
\end{align}
where $\boldsymbol{K}^\dagger\in\mathbb{R}^{m\times n}$ is the pseudoinverse of~$\boldsymbol{K}$.
\end{proposition}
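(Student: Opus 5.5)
The plan is to differentiate the two terms of $L_{{\cal D}}^*=\frac{1}{n}\|\boldsymbol{v}\|_1+\frac{2C}{\sqrt{n}}\|\boldsymbol{a}\|_2$ separately along the gradient flow, using the dynamics \eqref{eq:da} and \eqref{eq: v dynamic ln}.

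\emph{First term.} Write $\|\boldsymbol{v}\|_1=\sum_i |v_i|=\sum_i \text{sgn}(v_i)v_i$. Wherever $v_i(t)\neq 0$ we have $\frac{d}{dt}|v_i|=\text{sgn}(v_i)\dot v_i$. This gives
$$\frac{d}{dt}\frac{1}{n}\|\boldsymbol{v}\|_1=\frac{1}{n}\text{sgn}(\boldsymbol{v})^\top\dot{\boldsymbol{v}}=-\frac{1}{n}\text{sgn}(\boldsymbol{v})^\top\boldsymbol{H}\boldsymbol{v}$$
by \eqref{eq: v dynamic ln}. At isolated zeros of some $v_i$ the formula is read as a one-sided derivative, with the paper's convention $\text{sgn}(0)=1$. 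I would note this as a technical caveat, and remark that $\boldsymbol{v}(t)$ is analytic in $t$ ($\boldsymbol{v}(t)=e^{-\boldsymbol{H}t}\boldsymbol{v}(0)$), so each $v_i$ either vanishes identically or has isolated zeros.

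\emph{Second term.} Where $\boldsymbol{a}\neq 0$, $\frac{d}{dt}\|\boldsymbol{a}\|_2=\boldsymbol{a}^\top\dot{\boldsymbol{a}}/\|\boldsymbol{a}\|_2=-\boldsymbol{a}^\top\boldsymbol{K}^\top\boldsymbol{v}/\|\boldsymbol{a}\|_2$ by \eqref{eq:da}. It remains to express $\boldsymbol{a}$ through $\boldsymbol{v}$, and this is the step needing care. From \eqref{eq:vv}, $\boldsymbol{K}\boldsymbol{a}=\boldsymbol{v}+\boldsymbol{y}$, so $\boldsymbol{K}^\dagger(\boldsymbol{v}+\boldsymbol{y})=\boldsymbol{K}^\dagger\boldsymbol{K}\boldsymbol{a}$, which is the orthogonal projection of $\boldsymbol{a}$ onto the row space $\mathrm{range}(\boldsymbol{K}^\top)$. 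The key observation is that $\boldsymbol{a}(t)$ itself lies in $\mathrm{range}(\boldsymbol{K}^\top)$. Indeed, $\boldsymbol{a}(0)=0$ and $\dot{\boldsymbol{a}}=-\boldsymbol{K}^\top\boldsymbol{v}\in\mathrm{range}(\boldsymbol{K}^\top)$, so $\boldsymbol{a}(t)=-\int_0^t\boldsymbol{K}^\top\boldsymbol{v}(s)\,ds$ stays in that subspace. Hence $\boldsymbol{K}^\dagger\boldsymbol{K}\boldsymbol{a}=\boldsymbol{a}$, that is, $\boldsymbol{a}(t)=\boldsymbol{K}^\dagger(\boldsymbol{v}(t)+\boldsymbol{y})$. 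This is the only place the zero initialization of Remark~\ref{rk:init} enters, and it is the main point of the proof; without it the formula would be false.

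Substituting this expression for $\boldsymbol{a}$ and multiplying by $2C/\sqrt{n}$ yields exactly $\Psi(t)$ in \eqref{eq:13bis}. Adding the two contributions gives \eqref{eq: 67}. Finally, I would mention that the expression is defined for $t>0$ whenever $\boldsymbol{a}(t)\neq 0$. At $t=0$, $\|\boldsymbol{a}\|_2$ is not differentiable, and the formula is understood as a limit or right derivative.
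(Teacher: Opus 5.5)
Your proof is correct and takes the same route as the paper. You differentiate $\frac{1}{n}\|\boldsymbol{v}\|_1$ and $\frac{2C}{\sqrt{n}}\|\boldsymbol{a}\|_2$ separately by the chain rule, then substitute $\dot{\boldsymbol{a}}=-\boldsymbol{K}^\top\boldsymbol{v}$ and $\boldsymbol{a}(t)=\boldsymbol{K}^\dagger(\boldsymbol{v}(t)+\boldsymbol{y})$. The paper only cites Bj\"orck (1973) for that last identity, whereas your row-space invariance argument proves it directly ($\boldsymbol{a}(t)\in\mathrm{range}(\boldsymbol{K}^\top)$, and $\boldsymbol{K}^\dagger\boldsymbol{K}$ is the orthogonal projector onto it); your differentiability caveats are extra care the paper omits.
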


\subsection{Identification of an area where~$L_{{\cal D}}^*$ decreases}
%
%
We now consider the set ${\cal V}$ of eigenvalues of $\boldsymbol{H}$:
$\lambda_1\geq\lambda_2\geq\cdots\lambda_n\geq 0$.
This set  typically breaks down into  a
set ${\cal{V}}_1=\{\lambda_1,\dots,\lambda_{\alpha}\}$ made of
a small number of large values, and the remaining
set  ${\cal{V}}_2=\{\lambda_{\alpha+1},\dots,\lambda_{n}\}$
made of a ``bulk'' of low values.  See, e.g., \cite{OymakF2019,ghorbani2019,AdvaniSS20,MurrayJBM23}.


 
Let $\boldsymbol{P} \in \mathbb{R}^{n\times n}$ the transition matrix satisfying
$$\boldsymbol{H}=\boldsymbol{P}\text{diag}(\lambda_1,\dots,\lambda_{n})\boldsymbol{P}^\top$$
and $\boldsymbol{P}_i$ the $i$-th column of $\boldsymbol{P}$.
Let $\boldsymbol{u}_0 = \boldsymbol{P}^\top\boldsymbol{v}(0) \in \mathbb{R}^{n}$ and $U_i\in\mathbb{R}$ be the $i$-th component of $\boldsymbol{u}_0$. 
%
We have:
\begin{equation}\label{eq:v}
\boldsymbol{v}(t)=\sum_{i=1}^n \boldsymbol{w}_ie^{-\lambda_i t}
\end{equation}
with $\boldsymbol{w}_i=U_i\boldsymbol{P}_i$.
Let
\begin{align*}
    \Gamma_i(t)=&\;\left(\text{sgn}(\boldsymbol{v}(t))\right)^\top\boldsymbol{H}\boldsymbol{w}_i\ \ (i\in[\alpha]),\\
    \Gamma(t)=&\;\sum_{i\in[\alpha]}\Gamma_i(t),\\
    \Delta(t)=&\;\left(\text{sgn}(\boldsymbol{v}(t))\right)^\top\boldsymbol{H}\sum_{j=\alpha+1}^n \boldsymbol{w}_{j} e^{-\lambda_jt},\\
    \Omega(t)=&\; n\Psi(t)-\Delta(t).
\end{align*}

\begin{remark}
Note that $\Gamma$ involves the higher eigenvalues $\lambda_i$ ($i\in[\alpha]$),
and $\Delta$, hence $\Omega$,  the lower ones. As the higher eigenvalues are related to the
``informative'' space, and the lower ones to the ``nuisance'' one (see \cite{OymakF2019}), 
the quotient $\Gamma/\Omega$ can be interpreted
as a form of signal-to-noise ratio (SNR).
\end{remark}

We decompose the time space into contiguous time intervals
${\cal T}_1$, ${\cal T}_2,\dots$ over which
the signs of
each $\Gamma_i$ ($i\in[\alpha]$) and $\Omega$ are invariant (``sign-invariance''). So over each time interval~${\cal T}$ and each $i\in[\alpha]$, we have:
\begin{itemize}   
\item $\Gamma_i(t)> 0\ \forall t\in{\cal T}$
\ \ or\ \ \ $\Gamma_i(t)\leq 0 \ \forall t\in{\cal T}$,\ \ and
\item $\Omega(t)>0\ \forall t\in{\cal T}$
\ \ or\ \  $\Omega(t)\leq 0\ \forall t\in{\cal T}$.
\end{itemize}
We define
\begin{align*}
I_+({\cal T})=&\;\{i\in[\alpha]: \Gamma_i(t)> 0\ \forall t\in{\cal T}\}\\
I_-({\cal T})=&\;\{i\in[\alpha]: \Gamma_i(t)\leq 0\ \forall t\in{\cal T}\}.
\end{align*}
Let
\begin{equation}\label{eq:CS3}
\Phi(t)=
\sum_{i\in[\alpha]}\Gamma_i(t) e^{-\lambda_i t}  -\Omega(t).
\end{equation}
Note that the only time-varying terms 
of $\Phi(t)$ are  $\boldsymbol{v}(t)$ and~$e^{-\lambda_i t}$
($i\in[n]$).
 Since $\boldsymbol{v}(t)$ is itself a linear combination of $e^{-\lambda_i t}$ (see~\eqref{eq:v}),  the time-varying terms of~$\Phi(t)$ are just (products of)  $e^{-\lambda_i t}$.
Let us consider interval ${\cal T}_1$ (assumed to be of the form $[0,\tau_1)$, and consider the time $t^+$ defined, using $\Phi(t)$, as:
\begin{equation}\label{eq:t+}
t^+=\sup_{s\in {\cal T}_1}\left\{s: \Phi(t)> 0,\ \forall t\in[0,s)
\right\}.
\end{equation}
%
%
We suppose $\Phi(0)>0$. (Otherwise, $t^*=0$ and our method fails.)
We will show 
that $t^+$ is such that:
$\frac{dL_{{\cal D}}^*(s)}{dt}<0$ for all $s\in[0, t^+)$
(see~\eqref{eq:decrease0}). Hence, $L_{{\cal D}}^*$
is decreasing on $[0,t^+)$.
We can thus take
$t^+$ as a low estimate of $t^*$  
(which is the first time $s$ such that $dL_{{\cal D}}(s)/dt\geq 0$).
We have $t^+\leq t^*$ and, under certain conditions: $t^+=t^*$ (case~1 of Proposition~\ref{prop:1}).
 Formally:
%
\begin{proposition}\label{prop:1} 
We have:
\begin{equation}\label{eq:basic00}
    \frac{dL_{{\cal D}}^*(t)}{dt}< 0\ \ 
    \ \ \text{ iff }\ \ \Phi(t)> 0,\ 
\end{equation}
\begin{align}\label{eq:decrease0}
t^+=&\; \sup_{s\in {\cal T}_1}\left\{s: \frac{dL_{{\cal D}}^*(s)}{dt}< 0\,\ \forall t\in[0,s)\right\},\\
\label{eq:21bis}
t^+\leq&\; t^*.
\end{align}
There are two cases: 
\begin{itemize}
\item if $t^*\in{\cal T}_1$ (case 1), we have
\begin{align}\label{eq:case1}
t^+=t^*<&\; \tau_1. &
\end{align}
\item if $t^*\not\in{\cal T}_1$ (case 2), we have
\begin{align}\label{eq:case2}
t^+=\tau_1\leq&\; t^*. &
\end{align}
\end{itemize}
Suppose furthermore: 
\begin{equation}\label{eq:spec}
I_-({\cal T}_1)=\emptyset\ \mbox{ and }\ \ \Gamma(0)>\Omega(0)>0.
\end{equation}
Then:
\begin{align}\label{eq:encadrement}
t_1^+\leq t^+\leq&\; t_\alpha^+ &
\end{align}
where, for $i\in\{1,\alpha\}$, $t_i^+$ is defined as:
\begin{equation}\label{eq:newt+}
\begin{aligned}
t_i^+ &=\sup_{s\in {\cal T}_1} \left\{s: t<\frac{1}{\lambda_i}\ln \frac{\Gamma(t)}{\Omega(t)} \ \ 
\forall t\in[0,s)\right\}.
\end{aligned}
\end{equation}
\end{proposition}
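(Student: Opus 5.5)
The plan starts from Proposition~\ref{prop: 3.1} and the modal expansion \eqref{eq:v}. Since $\boldsymbol{H}\boldsymbol{v}(t)=\sum_{i=1}^n \boldsymbol{H}\boldsymbol{w}_i e^{-\lambda_i t}$, we get
\begin{equation*}
\left(\text{sgn}(\boldsymbol{v}(t))\right)^\top\boldsymbol{H}\boldsymbol{v}(t)=\sum_{i\in[\alpha]}\Gamma_i(t)e^{-\lambda_i t}+\Delta(t).
\end{equation*}
Multiplying \eqref{eq: 67} by $n>0$ then gives
\begin{equation*}
n\,\frac{dL_{{\cal D}}^*(t)}{dt}=-\sum_{i\in[\alpha]}\Gamma_i(t)e^{-\lambda_i t}-\Delta(t)+n\Psi(t)=-\Phi(t),
\end{equation*}
using $\Omega=n\Psi-\Delta$. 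This identity is exact, so \eqref{eq:basic00} follows at once.

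Substituting \eqref{eq:basic00} into the definition \eqref{eq:t+} gives \eqref{eq:decrease0}. For \eqref{eq:21bis}, note that $t^*$ is the first time the derivative becomes nonnegative. On $[0,t^+)$ the derivative is strictly negative, so no such time lies there, and hence $t^*\ge t^+$.

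The two cases come from comparing $t^*$ with $\tau_1$.
\begin{itemize}
\item If $t^*\in{\cal T}_1$, the derivative is negative on $[0,t^*)$, so $t^+\ge t^*$. Combined with \eqref{eq:21bis}, this gives $t^+=t^*<\tau_1$.
\item If $t^*\notin{\cal T}_1$, then $t^*\ge\tau_1$. The derivative is then negative on all of $[0,\tau_1)$, so the supremum restricted to ${\cal T}_1$ equals $\tau_1$, and $\tau_1\le t^*$.
\end{itemize}
One subtlety is the boundary case $t^*=\tau_1$, together with the infimum convention in \eqref{eq:t**}. I would handle this by noting that negativity on $[0,s)$ is all that is needed.

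For the bracketing \eqref{eq:encadrement}, assume \eqref{eq:spec}. On ${\cal T}_1$, sign-invariance together with $I_-({\cal T}_1)=\emptyset$ gives $\Gamma_i(t)>0$ for all $i\in[\alpha]$. Sign-invariance of $\Omega$ together with $\Omega(0)>0$ gives $\Omega(t)>0$ on ${\cal T}_1$. Since $\lambda_1\ge\lambda_i\ge\lambda_\alpha$ for $i\in[\alpha]$, the positivity of the $\Gamma_i$ yields
\begin{equation*}
\Gamma(t)e^{-\lambda_1 t}\le\sum_{i\in[\alpha]}\Gamma_i(t)e^{-\lambda_i t}\le\Gamma(t)e^{-\lambda_\alpha t}.
\end{equation*}
Define $\Phi_k(t)=\Gamma(t)e^{-\lambda_k t}-\Omega(t)$ for $k\in\{1,\alpha\}$. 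Then $\Phi_1\le\Phi\le\Phi_\alpha$ on ${\cal T}_1$.

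Because $\Omega>0$ there, $\Phi_k(t)>0$ is equivalent to $\Gamma(t)/\Omega(t)>e^{\lambda_k t}$. This in turn is equivalent to $t<\frac{1}{\lambda_k}\ln\frac{\Gamma(t)}{\Omega(t)}$, provided $\lambda_k>0$; in that case $\Gamma>0$ and the log is well defined. So $t_k^+$ is exactly the sup defined by the positivity of $\Phi_k$ on $[0,s)$.

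Monotonicity of the sup under pointwise inequalities then finishes the argument. If $\Phi_1>0$ on $[0,s)$, then $\Phi>0$ on $[0,s)$, which gives $t_1^+\le t^+$. If $\Phi>0$ on $[0,s)$, then $\Phi_\alpha>0$ on $[0,s)$, which gives $t^+\le t_\alpha^+$. The condition $\Gamma(0)>\Omega(0)>0$ ensures that all three sets are nonempty, since every positivity condition holds at $t=0$.

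The main obstacle I expect is the degenerate case $\lambda_\alpha=0$. There the logarithmic reformulation fails, and $t_\alpha^+$ should be read as $\sup{\cal T}_1$, since $\Gamma/\Omega>1$ would need to persist. I would also verify that the sign-invariance assumptions genuinely give strict positivity across the whole of ${\cal T}_1$. Beyond these points the argument is bookkeeping.
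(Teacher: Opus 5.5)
Your proposal is correct and follows essentially the same route as the paper. Both arguments rest on the exact identity $n\,dL_{{\cal D}}^*/dt=-\Phi$, obtained by splitting $\boldsymbol{H}\boldsymbol{v}$ into the top-$\alpha$ modes plus $\Delta$; then the case split on $t^*$ versus $\tau_1$; and finally the sandwich $\Gamma(t)e^{-\lambda_1 t}-\Omega(t)\le\Phi(t)\le\Gamma(t)e^{-\lambda_\alpha t}-\Omega(t)$ on ${\cal T}_1$, combined with monotonicity of the supremum and the logarithmic rewriting. Your explicit use of $\Omega>0$, $\Gamma>0$ and $\lambda_k>0$ in that rewriting fills in details the paper leaves implicit.
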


    \begin{remark} \label{rk:t+} Equation \eqref{eq:t+} 
     tells us that $t^+$  is the  least solution of 
     equation $\Phi(t)=0$, where~$\Phi(t)$
     involves only time-varying terms of the form $e^{-\lambda_j t}$ ($j\in[n]$).
        One can thus calculate $t^+$ using classical numerical methods (e.g., fixed-point iteration, Newton method) without needing to apply~GF.
        \end{remark}
    \begin{remark} Under \eqref{eq:spec}, Equation \eqref{eq:encadrement} 
    of Proposition~\ref{prop:1} tells us that
        $t^+$, as defined by~\eqref{eq:t+}, is between
        $t_1^+$ and $t_\alpha^+$ where $t_i^+$ ($i\in\{1,\alpha\}$) is the 
        least solution of the equation:
        \begin{equation*} 
        t=\frac{1}{\lambda_i}\ln\frac{\Gamma(t)}{\Omega(t)}.
        \end{equation*}
        Here again, $t_1^+$  and $t_\alpha^+$ can be computed with classical numerical methods without needing to apply~GF.
        We can even  avoid the use of numerical methods 
        by simply approximating~$t^+$ by
        \begin{equation}\label{eq:tapprox}
        t^{+}_{\text{approx}}= \frac{1}{\lambda_1}
        \ln\frac{\Gamma(0)}{\Omega(0)}.
        \end{equation} 
        As shown in Examples~\ref{ex:1} and~\ref{ex:2} of Section~\ref{sec:ex}, 
        the values of $L_{{\cal D}}^*(t)$ at $t=t^+$ and
        $t=t^+_{\text{approx}}$ 
        are often almost the same (see Tables 1 and 2).

    \end{remark}
\subsection{Beneficial early stopping}\label{sec:benefit}
The first local minimum reached by $L_{{\cal D}}^*(t)$ at $t=t^*$
corresponds to the minimum of the ``U-shaped'' phase 
of $L_{{\cal D}}^*=L_S+L_G^*$ (where $L_S$ decreases while $L_G^*$ increases). Later, the curve $L_{{\cal D}}^*$ can
fall again, and pass through other local minima. This is related to the phenomenon of ``epoch-wise double descent'' 
(see, e.g., \cite{heckel2020early,stephenson2021,Nakkiran_2021}). It is thus interesting to compare the value of  $L_{{\cal D}}^*(t)$ at $t=t^*$ or at  $t=t^+$ (as given by \eqref{eq:t+})
with 
$L_{{\cal D}}^*(\infty)$. If $L_{{\cal D}}^*(t^+)< L_{{\cal D}}^*(\infty)$,
the early stopping strategy is said to be ``beneficial''; otherwise,  a phenomenon of ``benign overfitting'' may occur, and it may be interesting to continue the training after $t^+$
(see, e.g., \cite{Bartlett-benign,sonthalia2024}). 
We now explain
how to obtain an analytic form for
$L_{{\cal D}}^*(\infty)$  (see \eqref{eq:infty}). 
Since $a(t)$ has been initialized to 0, we can show using \eqref{eq:vv} that at $t=\infty$ (cf.  \cite{bjorck1973}):
\begin{align*}
    \boldsymbol{a}(\infty)=&\;\boldsymbol{K}^\dagger \boldsymbol{y},\\
    \boldsymbol{v}(\infty)=&\;(\boldsymbol{K}\boldsymbol{K}^\dagger -\boldsymbol{I})\boldsymbol{y}.
\end{align*}
%
At $t=\infty$, Equation \eqref{eq:L_D*} gives:
\begin{align}
L_{{\cal D}}^*(\infty) =&\;  \frac{1}{n}
\|\boldsymbol{v}(\infty)\|_1 + \frac{2C\|\boldsymbol{a}(\infty)\|_2}{\sqrt{n}},
\intertext{hence:}
\label{eq:infty}
L_{{\cal D}}^*(\infty)=&\: \frac{1}{n}
\|(\boldsymbol{K}\boldsymbol{K^\dagger} - \boldsymbol{I})\boldsymbol{y}\|_1 + \frac{2C\|\boldsymbol{K^\dagger} \boldsymbol{y}\|_2}{\sqrt{n}}.
\end{align}
Note that the epochwise double descent typically happens in the overparametrization case (i.e., $m\gg n$). 
\section{Examples}\label{sec:ex}
We consider a training set of the form $S=\{\boldsymbol{h}_1,\dots,\boldsymbol{h}_n\}$ and a separate test set of the form $S_{test}=\{\boldsymbol{g}_1,\dots,\boldsymbol{g}_{n_{test}}\}$. For $p\in\{1,2\}$, let
\begin{align*}
L_{{\cal D}^*}^{L_p}=&\;\frac{1}{n}\sum_{i=1}^n\|\boldsymbol{h}_i\|_p^p+2pM^{p-1}\frac{C\|\boldsymbol{a}\|_2}{\sqrt{n}},\\ 
L_{test}^{L_p}=&\;\frac{1}{n}\sum_{i=1}^{n_{test}}\|\boldsymbol{g}_i\|_p^p.
\end{align*}
Let $t_{test}^{L_p}=\text{argmin}_t\  L_{test}^{L_p}$ and 
$t^*_{L_p}=\text{argmin}_t\  L_{{{\cal D}^*}}^{L_p}$.
(We have $L_{{\cal D}}^*\equiv L_{{\cal D}^*}^{L_1}$, 
$L_{test}\equiv L_{test}^{L_1}$,
$t^*\equiv t^*_{L_1}$, $t_{test}\equiv t_{test}^{L_1}$.) 
The value $t_{test}^{L_p}$ represents a good estimate of the 
{\em optimal} early stopping time with respect to $L_p$-norm.
On the other hand, it follows from Equation~\eqref{eq:LD0}  (and its counter part for $L_2$-norm) that, with probability at least
$1-\delta$:
\begin{equation}\label{eq:30}
    L_{test}^{L_p} \leq  L_{{\cal D}^*}^{L_p}+\epsilon_p.
\end{equation}
This is verified on the subsequent examples.
We also check that $t^*\equiv t^*_{L_1}$ is closer
than $t^*_{L_2}$
to~$t_{test}^{L_p}$ for both $p=1$ and $p=2$.
This confirms that our RC-based method works better with  $L_1$-norm 
  than with $L_2$-norm.
We focus on the case $m\leq n$ (underparameterization).
For $m>n$, we have  $L_{{\cal D}}^*(t)>0$ almost immediately for $t$ equal or close to $0$, and our method fails ($t^*\approx 0$).
In the examples, we check that our stopping rule is  beneficial (i.e., $L_{{\cal D}}^*(t^+)< L_{{\cal D}}^*(\infty)$).  
%
As shown in Example~\ref{ex:4}, the method applies equally to various kinds of data distribution (Gaussian, uniform, Pareto).
We observe  that it works better
(i.e., $t^*$ closer to $t_{test}$) when the ratio $n/m$ is larger. The numerical results are obtained using GD with a step size of $10^{-6}$.
For the sake of clarity, the values of times $t_1^+, t_\alpha^+, t^+, \dots$ are expressed as the number of GD steps. 

%
%
\begin{example}\label{ex:0} (Gaussian distribution) We consider the example of \cite{LiaoC18} in a binary classification setup. The input vector $\boldsymbol{h}_1,...,\boldsymbol{h}_n \in \mathbb{R}^m$ with $m=256$ are sampled from two distribution classes ${\cal C}_1$ and ${\cal C}_2$. A vector $\boldsymbol{h}_i$ belongs to the class ${\cal C}_j$ satisfies:
\begin{equation}
    \boldsymbol{h}_i = (-1)^j\boldsymbol{\mu} + \boldsymbol{z}_i
\end{equation}
for $j=\{1,2\}$, $\boldsymbol{\mu} = \left[2;\boldsymbol{0}_{m-1}\right]$, and the noise vector $\boldsymbol{z}_i\sim{\cal{N}}(\boldsymbol{0}_m,\boldsymbol{I}_m)$. To distinguish the two distribution, an input vector $\boldsymbol{h}_i$ \ is labeled by $y_i=1$ if $\boldsymbol{h}_i$ is in classes ${\cal C}_1$ and by $y_i = -1$ if $\boldsymbol{h}_i$ is in classes ${\cal C}_2$.
We first consider a training set of $n=512$ samples, with $216$ samples from ${\cal C}_1$ and the remaining $216$ from ${\cal C}_2$. 
We calculate the eigenvalues of the corresponding transition matrix~$\boldsymbol{H}$.
We have $\mathcal{V}_1:\{\lambda_1 = \lambda_\alpha=2865\}$ and $\mathcal{V}_2:\{\lambda_2= 1475, \dots,\lambda_n\}$.  
    We see that $dL_{{\cal D}}^*(t)/dt$ (computed using~\eqref{eq: 67})
    becomes 0 at $t=t^*=27<\tau_1=100$. This corresponds to case 1 of Proposition~\ref{prop:1}\ \  ($t^*\in {\cal T}_1=[0,\tau_1)$). The estimate~$t^+= 27$ (computed using~\eqref{eq:t+}) satisfies $t^+= t^*=27$ in accordance with Equation~\eqref{eq:case1}. See Figure~\ref{fig: RMT 1}. We see in
    Figure~\ref{fig: RMT 512 L test} that $t^* \equiv t_{L_1}^*$ is closer than $t_{L_2}^*$ to $t^{L_p}_{test}$ for both $p = 1$ and $p = 2$.

    Similarly, for $n = 16384$ and $m = 256$,
    the derivative $dL_{{\cal D}}^*(t)/dt$ 
    becomes~0 at $t=t^*=64<\tau_1=100$. This corresponds again to case 1 of Proposition~\ref{prop:1}. 
     The estimate~$t^+$ 
    satisfies
    $t^+= t^*=64<\tau_1$, in accordance with Equation~\eqref{eq:case1}.    See Figure~\ref{fig: RMT 16384}.
    Likewise in Figure~\ref{fig: RMT 16384 L test}, we see that 
    $t^*\equiv t^*_{L_1}$ is  closer than $t^*_{L_2}$ to $t_{test}^{L_p}$ for both $p=1$ and $p=2$.

\begin{figure}
    \centering
    \includegraphics[width=0.5\linewidth]{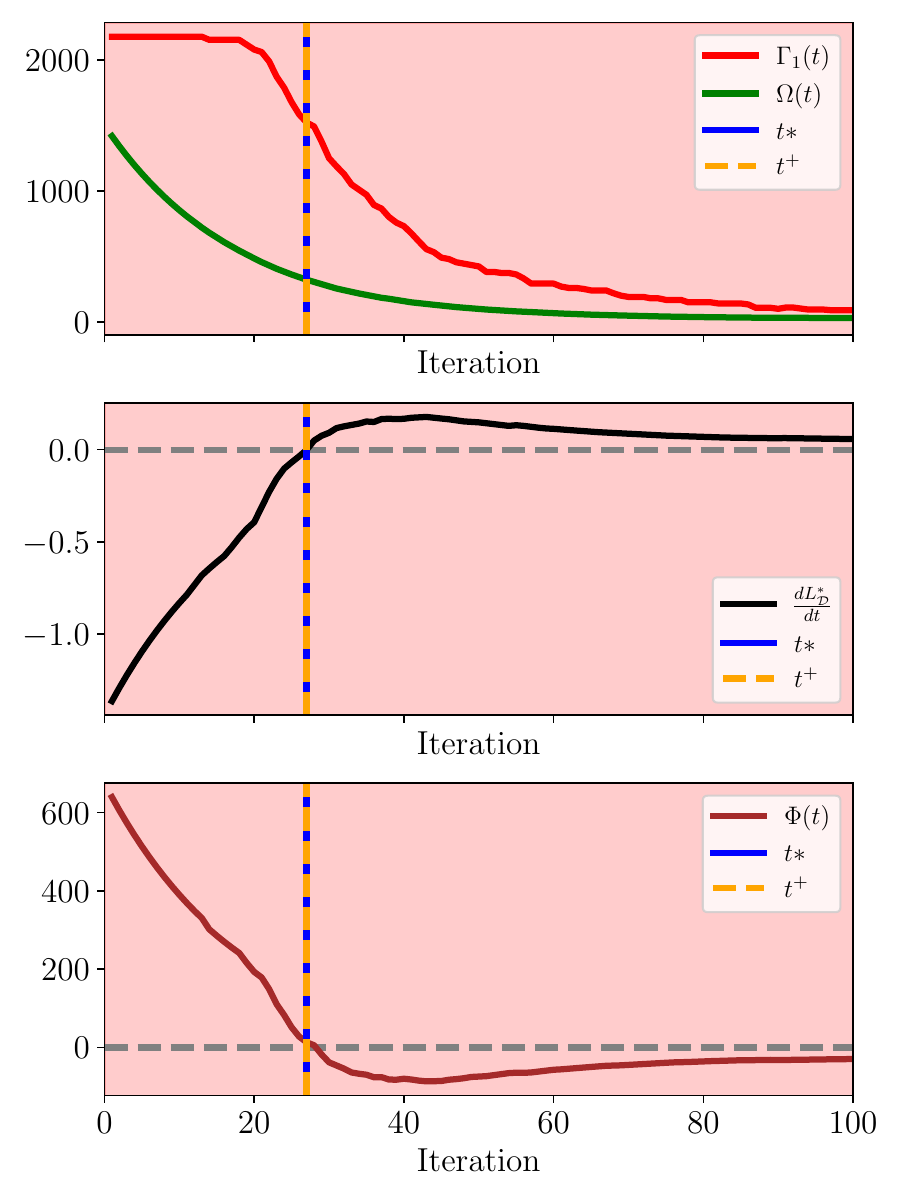}
    \caption{Top: the sign-invariant subinterval ${\cal T}_1 = \left[0,100\right)$ with curves $\Gamma_1,\Omega$.
    Middle: curve $dL_{{\cal D}}^*/dt$. 
    Bottom: curve $\Phi(t)$. 
    We have $t^+= t^*=27<\tau_1=100$ (case~1 of Proposition~\ref{prop:1}).}
    \label{fig: RMT 1}
\end{figure}

\begin{figure}
    \centering
    \includegraphics[width=0.5\linewidth]{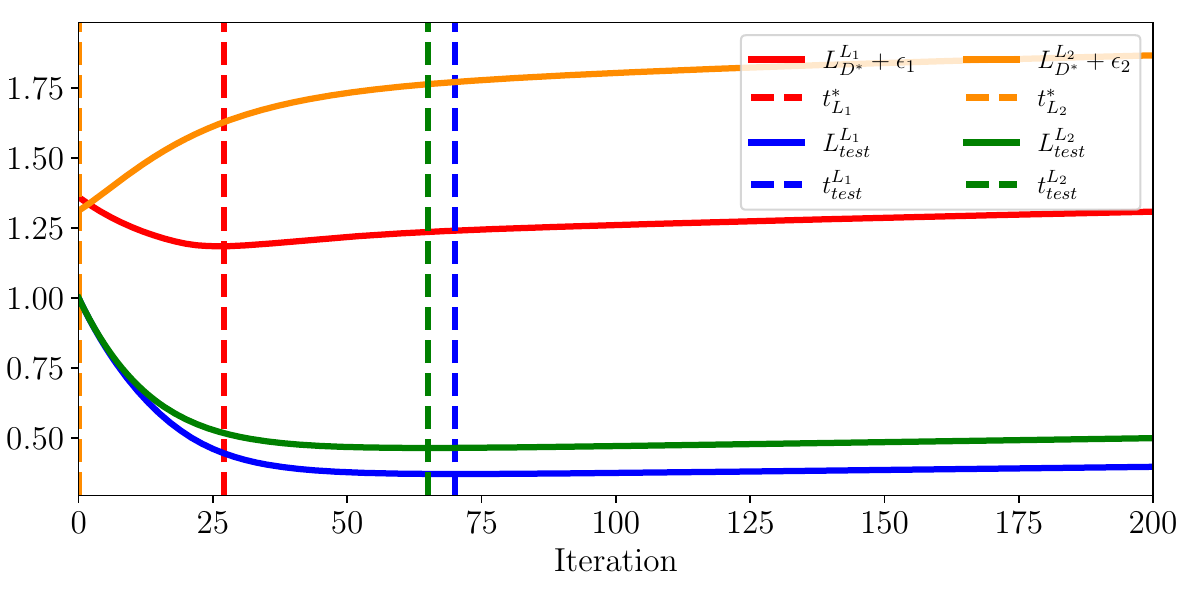}
    \caption{Curves $L_{{\cal D}^*}^{L_p}$ and $L_{test}^{L_p}$  for Gaussian distribution ($m=256$, $n= 512$). We check that $L_{test}^{L_p}$ is below $L_{{\cal D}^*}^{L_p}$ ($p=1,2$), and $t^*_{L_1}$ closer than $t^*_{L_2}$ to $t_{test}^{L_p}$ for $p=1$ and~$2$.}
    \label{fig: RMT 512 L test}
\end{figure}

\begin{figure}
    \centering
    \includegraphics[width=0.5\linewidth]{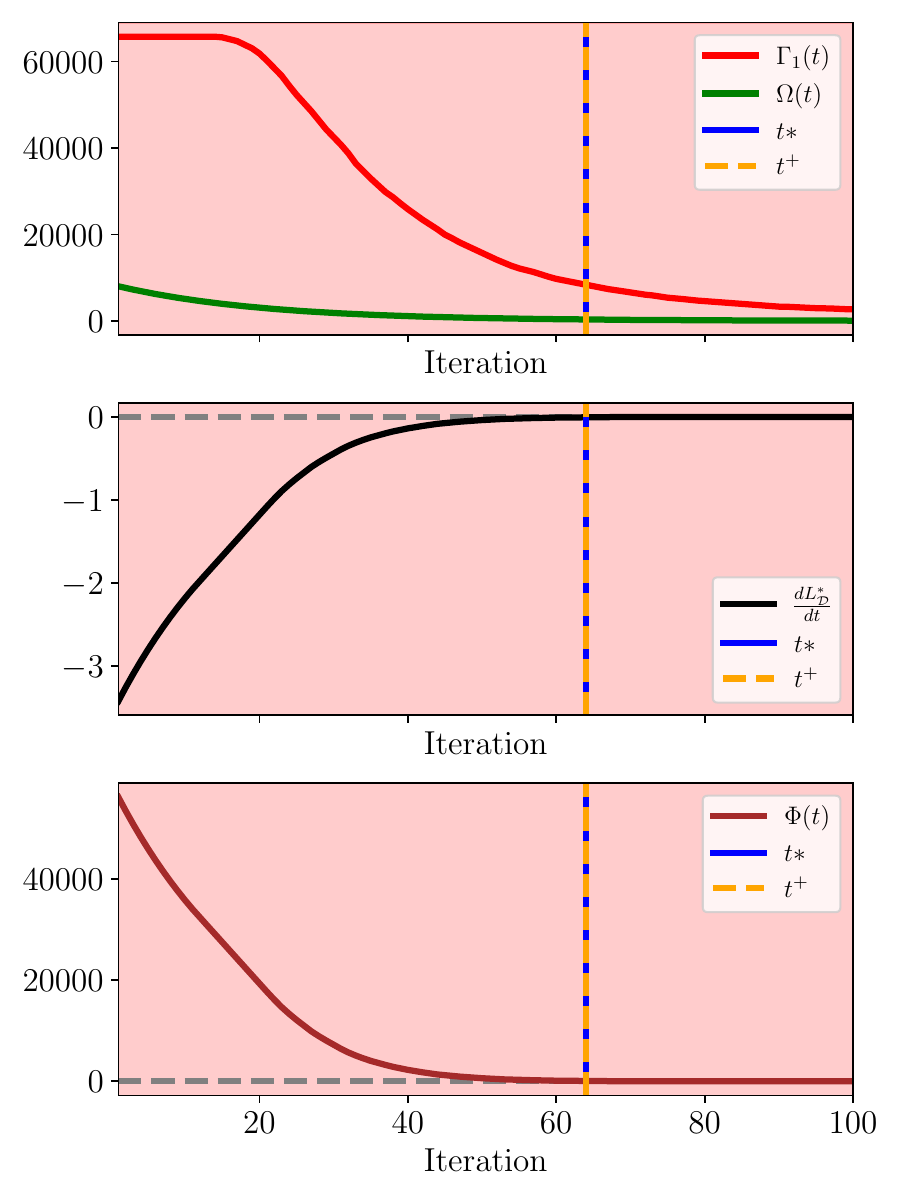}
    \caption{Top: the sign-invariant subinterval ${\cal T}_1 = \left[0,100\right)$ with curves $\Gamma_1,\Omega$.
    Middle: curve $dL_{{\cal D}}^*/dt$. 
    Bottom: curve $\Phi(t)$. 
    We have $t^+= t^*=64<\tau_1$ (case~1). 
    }
    \label{fig: RMT 16384}
\end{figure}

\begin{figure}
    \centering
    \includegraphics[width=0.5\linewidth]{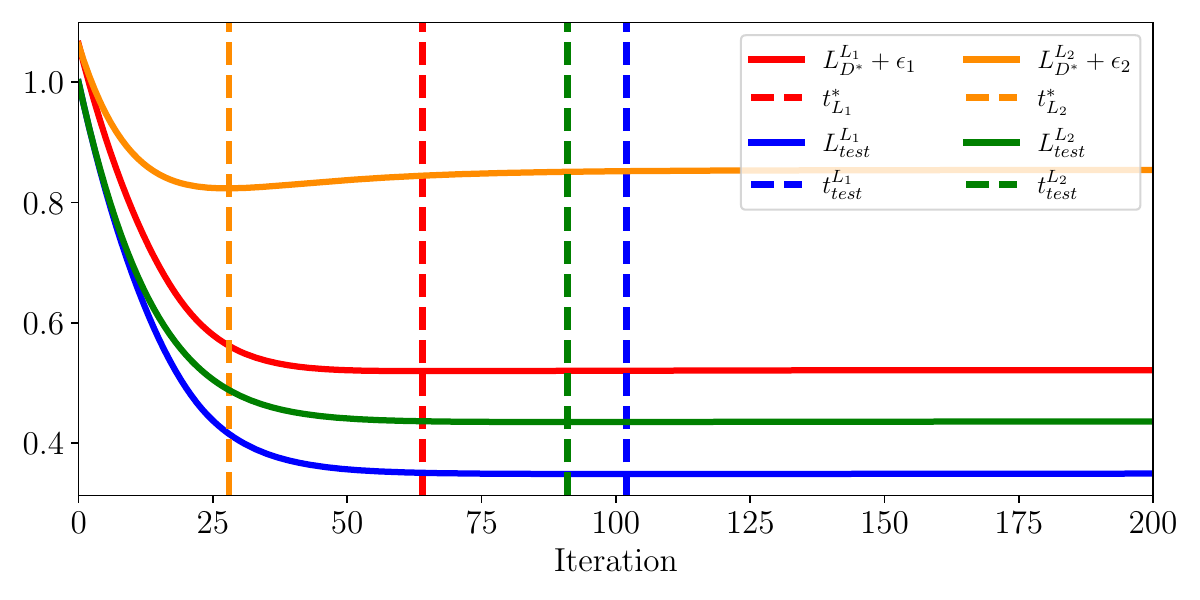}
    \caption{Curves $L_{{\cal D}^*}^{L_p}$ and $L_{test}^{L_p}$ for Gaussian distribution ($m=256$, $n= 16384$). We check that $L_{test}^{L_p}$ is below $L_{{\cal D}^*}^{L_p}$ ($p=1,2$), and $t^*_{L_1}$ closer than $t^*_{L_2}$ to $t_{test}^{L_p}$ for $p=1$ and~$2$.}
    \label{fig: RMT 16384 L test}
\end{figure}
\end{example}

\begin{example}\label{ex:1} (MNIST classification 3-5)
%
We consider a problem of binary classification between
classes 3 and 5 of the MNIST dataset
\cite{lecun1998mnist}.
We consider an NN 
with ReLU activation function,
4 hidden layers of width $m=10$ and output layer
$\boldsymbol{a}\in\mathbb{R}^m$.
Using Pytorch \cite{PaszkeGMLBCKLGA19}, we select a training set~$S$ of $n=10^4$ samples of the form $(\boldsymbol{x}_i,y_i)$ (with $\boldsymbol{x}_i\in\mathbb{R}^{784}$) and a test set $S_{test}$ of $1900$ samples. 
%
%
Following the {\em linear probing} method 
(see \cite{alain2016}),
we reduce the NN to a linear model as follows.
We first freeze each weight $a_i$ ($i\in[m]$) of $\boldsymbol{a}$ arbitrarily to either~$1$ or $-1$, and
pre-train the resulting model $NN$ using GD
(with a step size $\eta=10^{-6}$). The weights of the hidden layers
are then frozen themselves, which yields
$n$ fixed vectors of the form $\boldsymbol{h}_1=NN(\boldsymbol{x}_1),\dots, \boldsymbol{h}_n=NN(\boldsymbol{x}_n)\in\mathbb{R}^m$.
We then regard $\boldsymbol{h}_1,\dots, \boldsymbol{h}_n$ 
as input vectors, and consider
the linear model consisting only of the output layer $\boldsymbol{a}$ reinitialized
to~0.
%
%
We calculate the eigenvalues of the corresponding
transition matrix $\boldsymbol{H}$. 
We have ${\cal{V}}_1:\{\lambda_1 = 12919,\lambda_2 \equiv \lambda_\alpha=11384$\}, 
and ${\cal{V}}_2:\{\lambda_3 = 48, \dots, \lambda_n\}$.
%
\begin{figure}[htp]
    \centering
    \includegraphics[scale = 0.4]{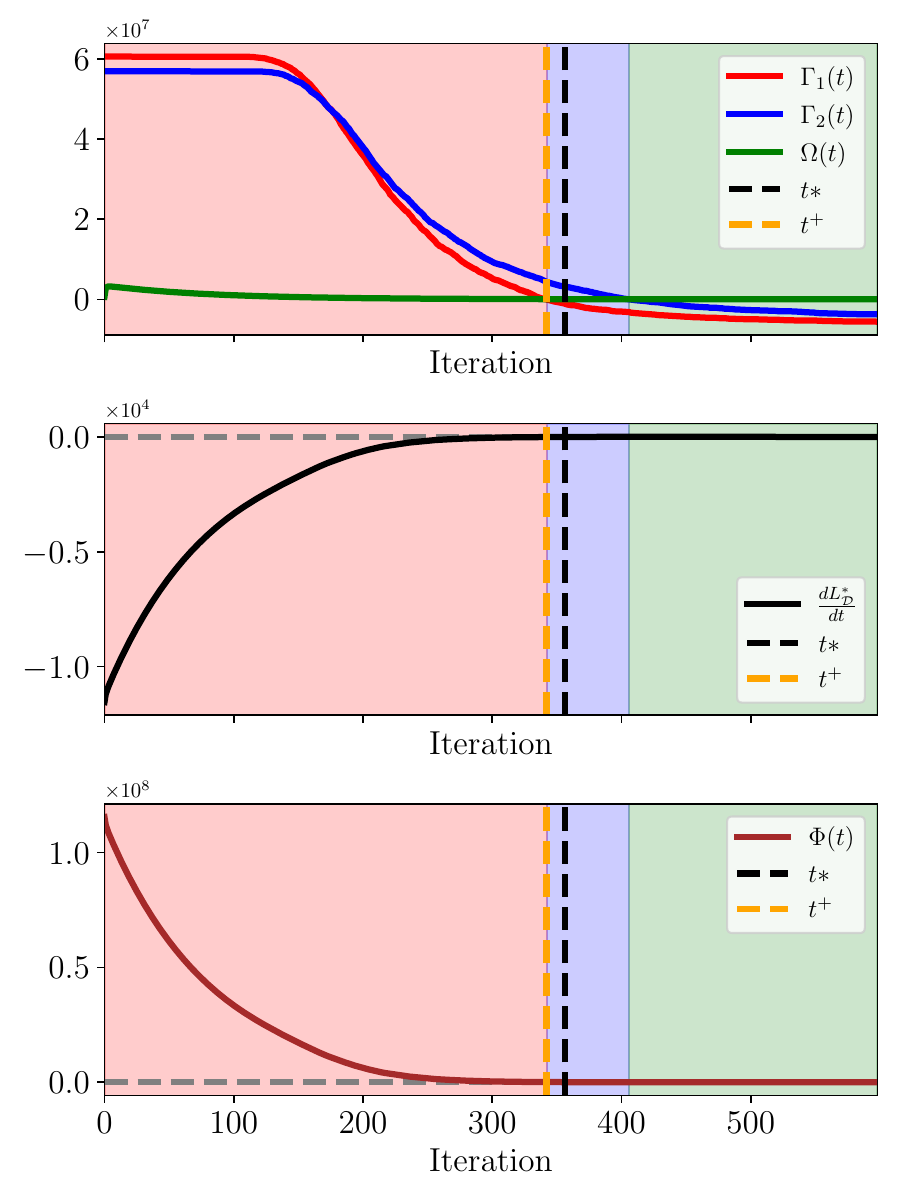}
  %
    \caption{Top: sign-invariant subintervals ${\cal T}_1 = \left[0,343\right)$,
 ${\cal T}_2 = \left[343,407\right)$, 
 ${\cal T}_3 = \left[407,600\right)$ with curves $\Gamma_1,\Gamma_2,\Omega$. Middle: curve $dL_{{\cal D}}^*/dt$. 
    Bottom: curve~$\Phi(t)$. 
    We have $t^+= 342\approx \tau_1=343\leq t^*=357$ (case 2 of Proposition~\ref{prop:1}).}
    \label{fig: check dLDdt}
\end{figure}
%
We consider the time interval $T = \left[0,600\right)$, which divides it into 
three sign-invariant subintervals:
${\cal T}_1 = \left[0,343\right)$ (in red, the top plot of Figure~\ref{fig: check dLDdt}),
 ${\cal T}_2 = \left[343,407\right)$ (in  blue), 
 and 
 ${\cal T}_3 = \left[407,600\right)$ (in green). 
 We find $C=1.572$, $M=2$ and $\epsilon=0.0814$.  Using~\eqref{eq: 67}, we
  find  $dL_{{\cal D}}^*(t)/dt\geq 0$  for the first time
    at $t^*=357$ (see
    the middle plot of Figure~\ref{fig: check dLDdt}). 
 On the other hand,  we have:
    $I_-=\emptyset$ and 
    $\Gamma(0)>\Omega(0)>0$ on~${\cal T}_1$, so~\eqref{eq:spec} is  satisfied.
 %
    We find  $t^+= 342$ using Equation~\eqref{eq:t+}
    (see the bottom plot of Figure~\ref{fig: check dLDdt}),
    and verify $t^+_1=342\leq t^+\leq t^+_\alpha=342$
    in accordance with Equation~\eqref{eq:encadrement}. 
    Since $t^*=357>\tau_1=343$, we have $t^*\not\in {\cal T}_1$
    (case~2 of Proposition~\ref{prop:1}).  So
$t^+= 342\approx\tau_1=343\leq t^*=357$,
    in accordance with~\eqref{eq:case2}.

    
%

\begin{figure}[htp]
    \centering
    \includegraphics[scale = 0.35]{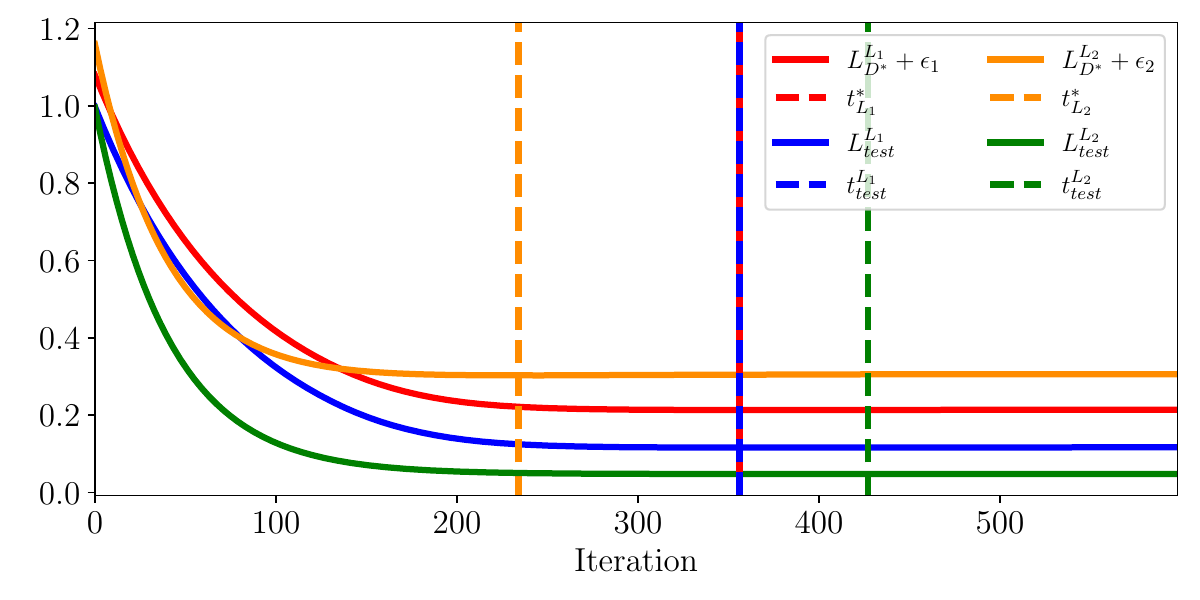} 
    \caption{Curves $L_{{\cal D}^*}^{L_p}+\epsilon_p$ and $L_{test}^{L_p}$  
    for classification 3-5. We check that $L_{test}^{L_p}$ is below $L_{{\cal D}^*}^{L_p}+\epsilon_p$ ($p=1,2$), and $t^*_{L_1}$ closer than $t^*_{L_2}$ to $t_{test}^{L_p}$ for $p=1$ and $p=2$.
    }
    \label{fig: L test 3 5}
\end{figure}
%
%
Figure~\ref{fig: L test 3 5} shows that, for $\delta=0.05$,  
curve $L_{test}$ is always below $L_{{\cal D}}^* +\epsilon$, in accordance with equation~\eqref{eq:30}.  We see that $t_{test}=356$ almost coincides with
$t^*=357$. 
So curves
$L_{test}$ and~$L_{{\cal D}}^*$ reach their  minima
at almost the same  time. 
The values of $L_{test}(t)$ at $t=t^+, t^*, t_{test}$ are identical (equal to~$0.1166$), and almost equal to
its value  at $t=t_{\text{approx}}^+$ (equal to~$0.1167$).
See Table~1.
%
This shows an excellent agreement between analytic and numerical estimates of the optimal stopping time.
%
Besides, we have:
$L_{\cal{D}}^*(t^+)+\epsilon = 0.2138 < L_{\cal{D}}^*(\infty)+\epsilon = 0.2274$, which shows that early stopping is beneficial. 


\begin{table}[]
    \centering
    \begin{tabular}{|c|c|c|c|c|c|c|c|}
    \hline
         & $t_1^+$& $t_\alpha^+$& $t^+$& $t^+_{\text{approx}}$& $t^*$& $t_{test}$& $\infty$ \\
    \hline
        $t$ & 342 & 342 & 342 & 400& 357& 356& $\infty$\\
        $L_{test}$ & 0.1166 & 0.1166 & 0.1166 & 0.1167& 0.1166& 0.1166 & 0.1172 \\
         $L_{\cal{D}}^*+\epsilon$ & 0.2138 & 0.2138 & 0.2138 & 0.2139& 0.2138&0.2138& 0.2274\\
    \hline
    \end{tabular}
    \caption{The values of $t^+_1$, $t^+_\alpha$ ($\alpha=2$), $t^+$, $\dots$, $t_{test}$ with corresponding $L_{test}(t)$ and $L_{\cal{D}}^*(t)+\epsilon$ \ \ 
    (Example~\ref{ex:1}).}
    \label{tab:placeholder}
\end{table}

\end{example} 

\begin{example}\label{ex:2} (MNIST classification 0-1)
%
We use the same framework as in Example~\ref{ex:1} with a 4-hidden-layer NN of width $m=10$, 
and a training set $S$ of $n=10^{4}$ examples, corresponding  to classes~0 and~1 (instead of~3 and~5).
We calculate the eigenvalues of the corresponding
transition matrix $\boldsymbol{H}$. 
We have ${\cal{V}}_1:\{\lambda_1 = 11432,\lambda_2 \equiv \lambda_\alpha=10639$\}
and ${\cal{V}}_2$: $\{\lambda_3 = 0.005,\dots,\lambda_n\}$.
%
\begin{figure}[htp]
    \centering
    \includegraphics[scale = 0.4]{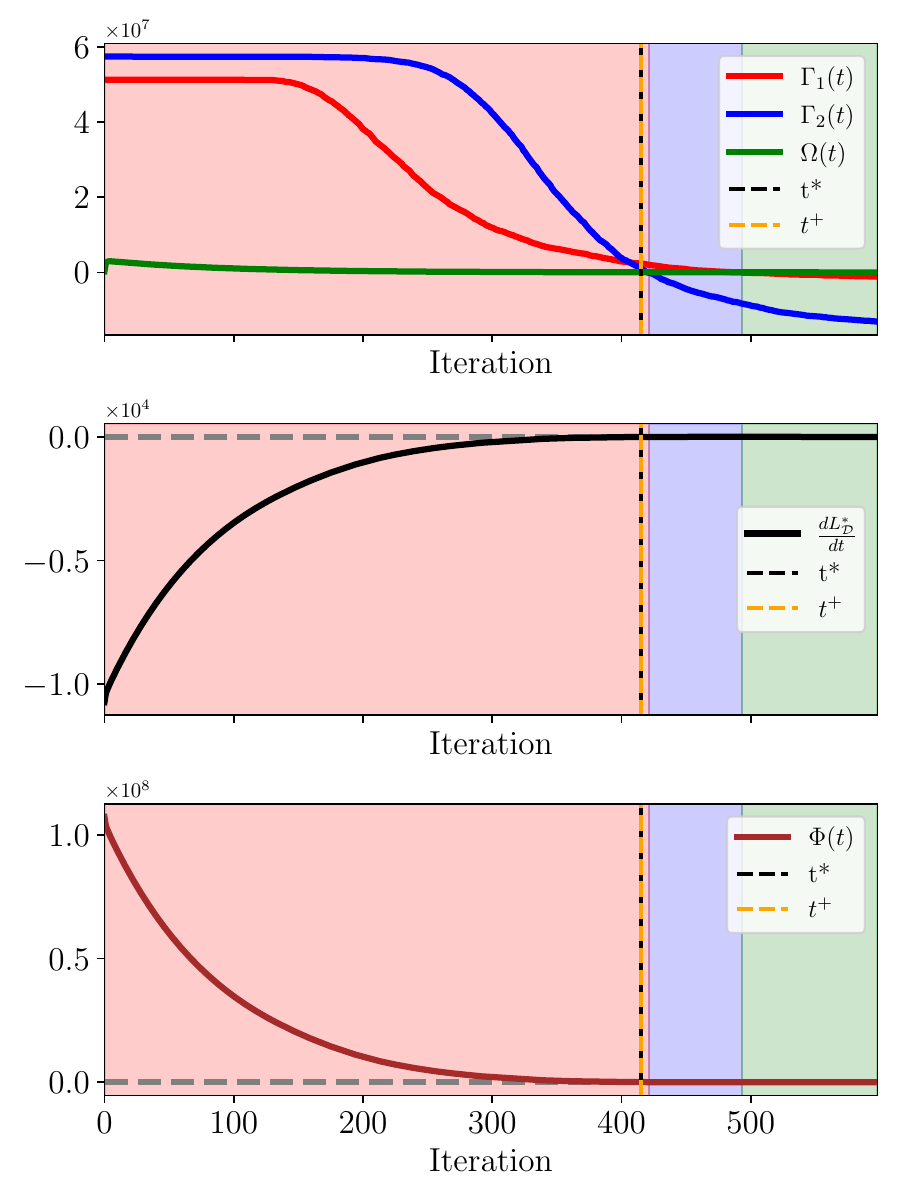}

    \caption{Top: sign-invariant subintervals ${\cal T}_1 = \left[0,421\right)$,
 ${\cal T}_2 = \left[421,493\right)$,
 ${\cal T}_3 = \left[493,600\right)$ with curves $\Gamma_1,\Gamma_2,\Omega$. Middle: curve $dL_{{\cal D}}^*/dt$. 
    Bottom: curve~$\Phi(t)$. 
    We have
    $t^+=t^*=415<\tau_1=421$ (case 1).}
    \label{fig: check dLDdt 0 1}
\end{figure}
%
%
The time interval $T = \left[0,600\right)$ divides into sign-invariant subintervals
${\cal T}_1 = \left[0,421\right)$ (in red, the top plot of Figure~\ref{fig: check dLDdt 0 1}),
${\cal T}_2 = \left[421,493\right)$ (in blue), 
and 
${\cal T}_3 = \left[493,600\right)$ (in green).
We find $C=1.495$, $M=2$ and $\epsilon=0.0814$. 
 Using~\eqref{eq: 67}, 
    we find 
    $dL_{{\cal D}}^*/dt= 0$ 
    at $t^*=415$ (see the middle plot
    Figure~\ref{fig: check dLDdt 0 1}).
    Here again, 
     \eqref{eq:spec} holds  on~${\cal T}_1$.
We find  $t^+= 415$ (using Equation~\eqref{eq:t+}), and verify
$t_1^+=414\leq t^+\leq t_\alpha^+\equiv t_2^+=417$,
in accordance with Equation~\eqref{eq:encadrement}.
    See the bottom plot of Figure~\ref{fig: check dLDdt 0 1}.
Since $t^*=415<\tau_1=421$, we have $t^*\in{\cal T}_1$
    (case~1 of Prop.~\ref{prop:1}).
     We have: $t^+=t^*=415<\tau_1=421$, 
    in accordance with~\eqref{eq:case1}.
    


\begin{figure}[htp]
    \centering
    \includegraphics[scale = 0.35]{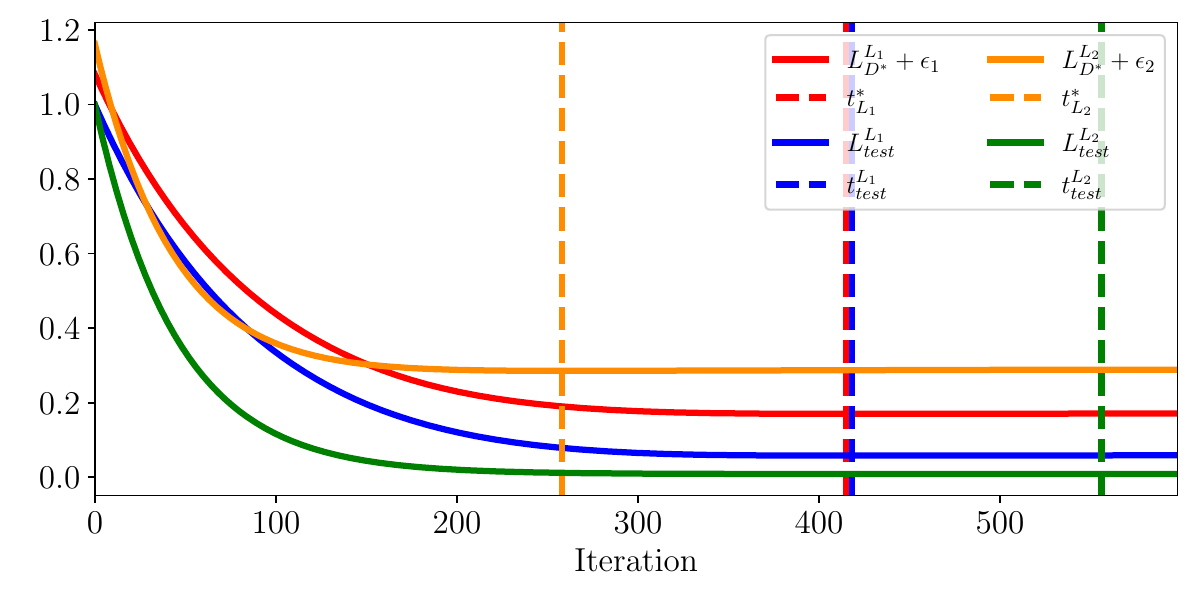}  
    \caption{Curves $L_{{\cal D}^*}^{L_p}+\epsilon_p$ and $L_{test}^{L_p}$  for classification 0-1. We check that $L_{test}^{L_p}$ is below $L_{{\cal D}^*}^{L_p}+\epsilon_p$ ($p=1,2$), and $t^*_{L_1}$ closer than $t^*_{L_2}$ to $t_{test}^{L_p}$ for $p=1$ and $p=2$.
    }
    \label{fig: L test 0 1}
\end{figure}

Figure~\ref{fig: L test 0 1} shows that curve
$L_{test}$ is  again below $L_{{\cal D}}^* + \epsilon$ 
in accordance with~\eqref{eq:30}.
We have
$t_{test}=418$, which is close to 
$t^*=415$. So
$L_{test}$ and $L_{{\cal D}}^*$ reach their  minima
at around the same time ($t^+=t^*=415\approx t_{test}^{L_1}=418$).
The values of $L_{test}(t)$ at $t=t^+, t^*, t_{test}$ are identical ($=0.0582$), and almost equal to
the value at $t=t_{\text{approx}}^+$ ($=0.0586$). See Table~2.
Here again, 
there is  an excellent agreement
between analytic and numerical estimates of the optimal
stopping time.
%
\begin{table}[]
    \centering
    \begin{tabular}{|c|c|c|c|c|c|c|c|}
    \hline
         & $t_1^+$ & $t_\alpha^+$ & $t^+$& $t^+_{\text{approx}}$& $t^*$& $t_{test}$& $\infty$ \\
    \hline
        $t$ & 414 & 417 &  415 & 508& 415& 418& $\infty$\\
        $L_{test}$ & 0.0582 & 0.0582 & 0.0582 & 0.0586& 0.0582& 0.0582& 0.0590 \\
         $L_{\cal{D}}^*+\epsilon$ &0.1703 & 0.1703 &  0.1703 &  0.1707& 0.1703& 0.1703& 0.1713\\
    \hline
    \end{tabular}
    \caption{The values of $t_1^+$, $t_\alpha^+$ ($\alpha=2$), $t^+$, $\dots$, $t_{test}$ with corresponding $L_{test}(t)$ and $L_{\cal{D}}^*(t)+\epsilon$ \ \  
    (Example~\ref{ex:2}).}
    \label{tab: 0 1}
\end{table}
Finally, we have:
$L_{\cal{D}}^*(t^+)+\epsilon = 0.1703 < L_{\cal{D}}^*(\infty)+\epsilon = 0.1713$, which proves that early stopping is again beneficial. 
\end{example}
\begin{example}\label{ex:4} (Various input data distributions)
    For each kind of distribution, we give a figure displaying the
    curves $L_{{\cal D}^*}^{L_p}+\epsilon_p$ and $L_{test}^{L_p}$
    with two  couples $(m,n)$ of different ratio~$m/n$: Figure~\ref{fig:9} for Gaussian distribution, Figure~\ref{fig:10} for uniform distribution, Figure~\ref{fig:13} for Pareto distribution.
    For all these examples, we have $\alpha=1$.
    The values of $\lambda_1$ and $\lambda_{2}$  are  given in caption.  
    We see that the method works better (i.e., $t^*$ closer to $t_{test}$) when the ratio $n/m$ is larger. For example,
    in Figure~\ref{fig:13}, the method fails for  $m=256, n=512$
    (top plot: $t^+=t^*=0$) while it succeeds for  $m=256, n=16384$ (bottom  plot: $t^+=t^*=45\approx  t_{test}=46$).

\begin{figure}[ht!]
    \centering
    \includegraphics[width=0.5\linewidth]{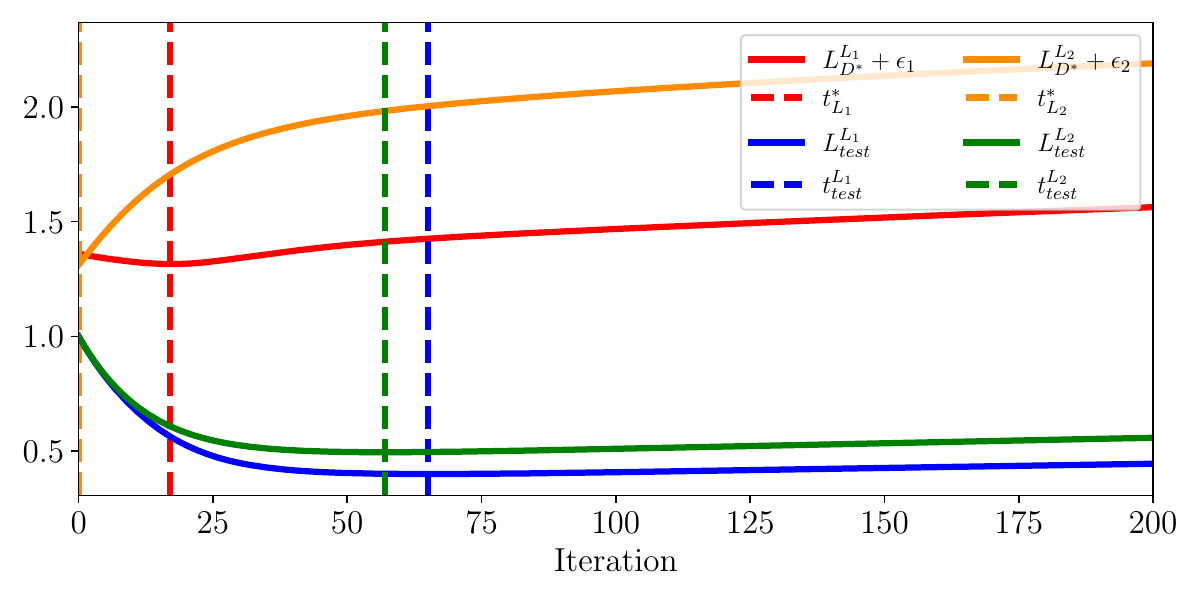}
    \includegraphics[width=0.5\linewidth]{Figure/Compare_l1_l2_n=16384.pdf}
    \caption{Gaussian distribution. Top: $m=512$, $n= 512$
    ($\lambda_1 \equiv \lambda_\alpha=2291$, $\lambda_2= 486$).
    Bottom: $m=256$, $n= 16384$ ($\lambda_1 \equiv \lambda_\alpha=8.23\cdot 10^4$, $\lambda_2= 2.05\cdot 10^4$).
    }
    \label{fig:9}
\end{figure}


\begin{figure}[ht!]
    \centering
    \includegraphics[width=0.5\linewidth]{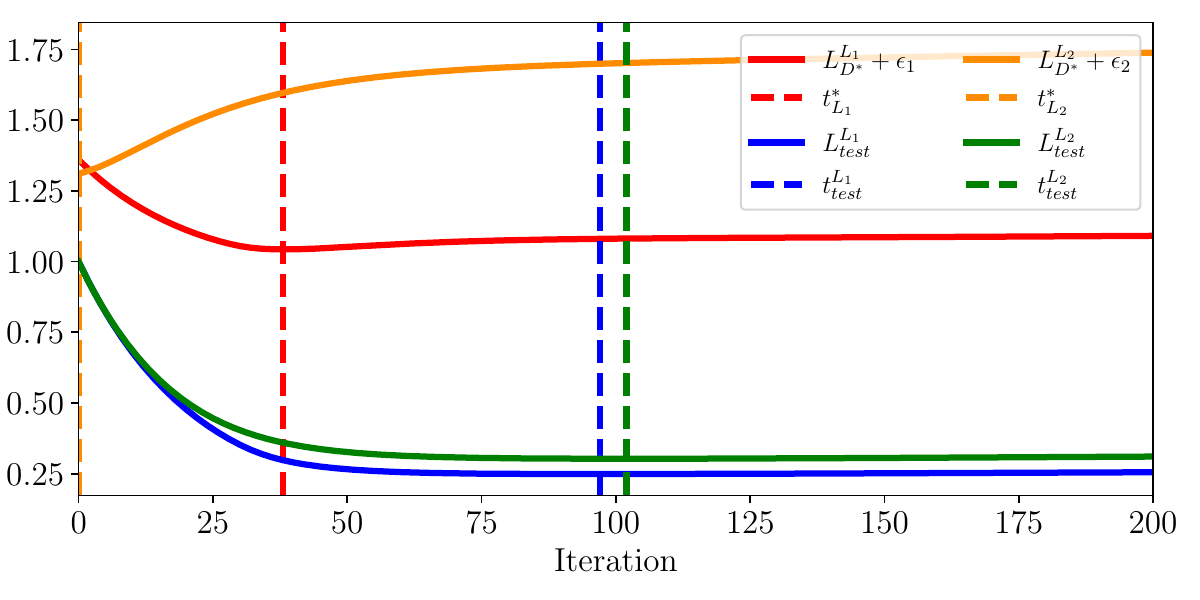}
    \includegraphics[width=0.5\linewidth]{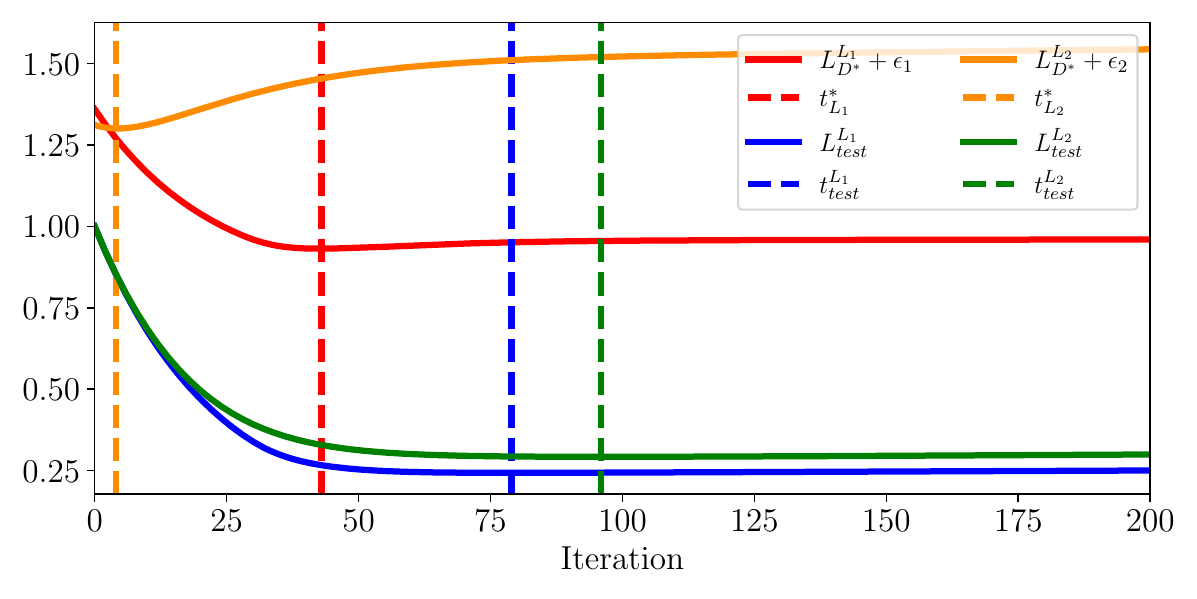}
    \caption{
    Uniform distribution.
    Top: $m = 512$, $n= 512$ ($\lambda_1 \equiv  \lambda_\alpha=2413$, $\lambda_2 = 679$).
    Bottom: $m=256$, $n= 512$ ($\lambda_1\equiv\lambda_\alpha= 2291$, $\lambda_2 = 486$).}
    \label{fig:10}
\end{figure}


\begin{figure}[ht!]
    \centering
    \includegraphics[width=0.5\linewidth]{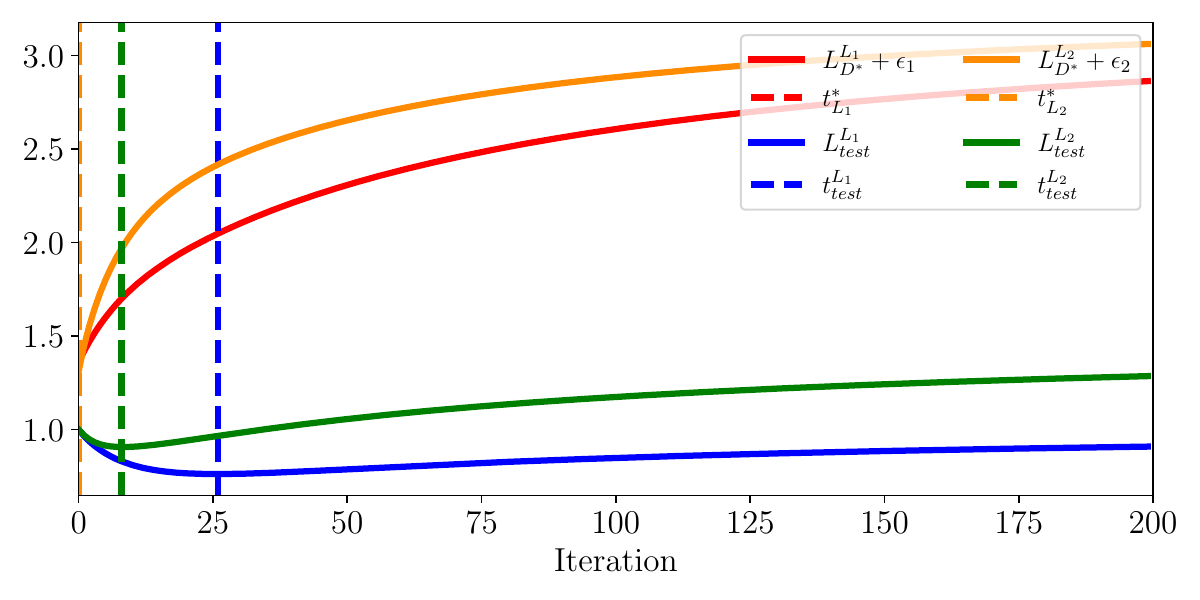}
    \includegraphics[width=0.5\linewidth]{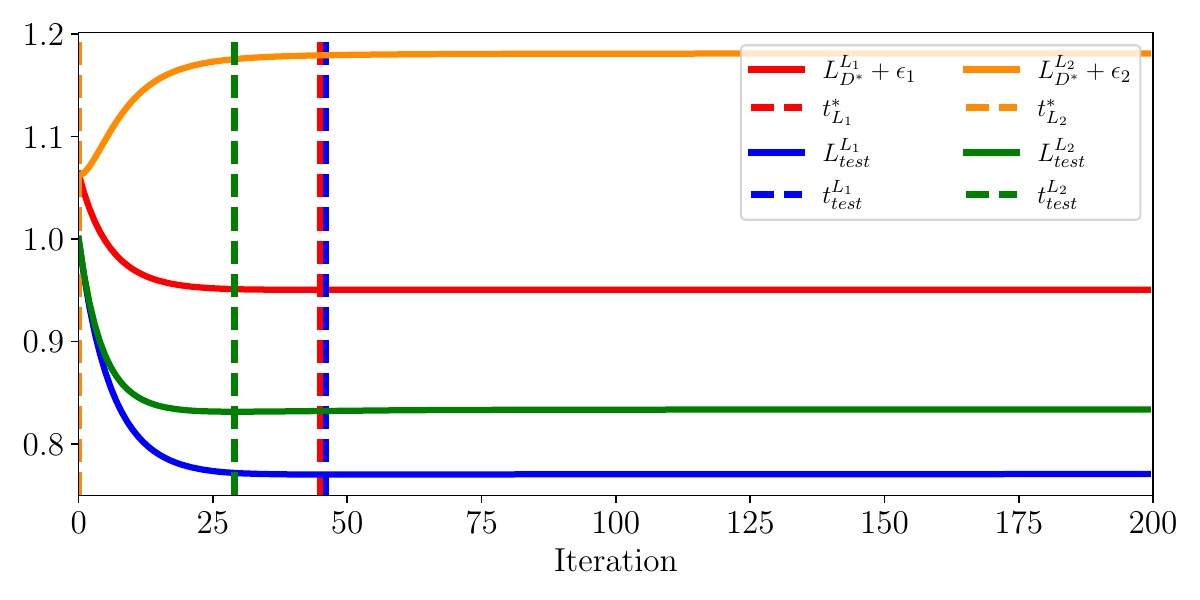}
    \caption{
    Pareto distribution. 
    Top: $m=256$, $n= 512$.
    ($\lambda_1 \equiv\lambda_\alpha= 5\cdot 10^5$, $\lambda_2 = 1\cdot 10^5$).
    Bottom: $m=256$, $n= 16384$.  
    ($\lambda_1 \equiv\lambda_\alpha=17\cdot 10^6$, $\lambda_2 = 3\cdot 10^6$).
    }
    \label{fig:13}
\end{figure}
\end{example}

\section{Final Remarks}









We have proposed a data-dependent definition of time $t^+$ as a stopping time for the gradient flow process.
This time $t^+$ is a low estimate of the time $t^*$ at which
an upper bound on the
population loss $L_{{\cal D}}$ reaches its minimum.
The advantage is that $t^+$
(and even more so its approximation $t^+_{\text{approx}}$) is easily calculated
without needing to apply gradient~flow. 
%
%
%
Our method is well suited to the underparameterized context ($m\leq n$), with results improving as the ratio $n/m$ increases.
It  applies equally to different types of  data distributions without requiring any specific knowledge on them.

We have focused on linear models. Although the method, in combination with linear probing, has been applied successfully to nonlinear examples, it would be interesting to apply it directly to nonlinear models. 
On the other hand, we have considered only scalar outputs ($y\in\mathbb{R}$). Another extension would be  to consider vectorial outputs ($y\in\mathbb{R}^q$ with $q\geq 2$). Finally, it would be interesting to treat input data affected by random noise.
All these extensions will be the subject of future work.
\newpage

\appendix
\section{Proof of Proposition \ref{prop:Rademacher L1}}
\begin{proof}
The proof is a simple adaptation of the proof of Proposition~3 of~\cite{XavierCF25} in the case of linear models. 
%
%
By Theorem~(11.3) of \cite{mohri2018foundations},
we have with probability~$1-\delta$:
$$L_{{\cal D}}-L_S\leq 2{\cal R}_n+\epsilon$$
where ${\cal R}_n$ is the Rademacher complexity of the linear class.
By Theorem 5.5 of \cite{ma2022}, we have
${\cal R}_n\leq \|\boldsymbol{a}\|_2 C/\sqrt{n}$.
It then follows:
$$L_{{\cal D}}\leq L_S+\frac{2C\|\boldsymbol{a}\|_2}{\sqrt{n}}+\epsilon=L_{{\cal D}}^*+\epsilon,$$
i.e. 
\eqref{eq:LD0}.
\end{proof}

\section{Proof of Proposition \ref{prop: 3.1}}
\begin{proof}
    Using \eqref{eq:LS} and \eqref{eq: v dynamic ln}, the derivative of $L_S$ is:
\begin{equation}\label{eq: 71}
\begin{aligned}
     \frac{dL_S(t)}{dt} =&\; \frac{1}{n}\frac{d\|\boldsymbol{v}(t)\|_1}{dt} =\frac{1}{n}\frac{\partial\|\boldsymbol{v}\|_1}{\partial \boldsymbol{v}}\frac{d\boldsymbol{v}(t)}{dt}
     =
     -\frac{1}{n}\left(\text{sgn}(\boldsymbol{v}(t))\right)^\top\boldsymbol{H}\boldsymbol{v}(t).
\end{aligned}
\end{equation}

Besides, using \eqref{eq:LG}, the derivative of $L_G^*$ is:
\begin{equation}
\begin{aligned}
\label{eq: 72}
     \frac{dL_G^*(t)}{dt} =&\; \frac{2C\boldsymbol{a^\top}}{\sqrt{n}\|\boldsymbol{a}\|_2}\frac{d\boldsymbol{a}}{dt}.
\end{aligned}
\end{equation}
From \eqref{eq:vv}, we have:
$\boldsymbol{K}\boldsymbol{a}(t) = \boldsymbol{v}(t) + \boldsymbol{y}$.
Since $\boldsymbol{a}(t)$ is initialized to 0,  
 know that
the vector $\boldsymbol{a}(t)$ updated by GF, satisfies
(see, e.g., \cite{bjorck1973}):
\begin{equation}\label{eq: a}
\boldsymbol{a}(t) = \boldsymbol{K}^\dagger\left(\boldsymbol{v}(t) + \boldsymbol{y}\right).
\end{equation}
So, using  \eqref{eq:da}, \eqref{eq:13bis} and \eqref{eq: a}, Equation~\eqref{eq: 72} becomes :
\begin{equation}
\begin{aligned}
\label{eq: 72bis}
     \frac{dL_G^*(t)}{dt} =&\; -\frac{2C}{\sqrt{n}}\frac{(\boldsymbol{K}^\dagger(\boldsymbol{v}(t)+\boldsymbol{y}))^\top}{\|\boldsymbol{K}^\dagger(\boldsymbol{v}(t)+\boldsymbol{y})\|_2} \boldsymbol{K}^\top v(t) =\Psi(t).
\end{aligned}
\end{equation}

It then follows from \eqref{eq:L_D*}, \eqref{eq: 71} and \eqref{eq: 72bis}:
\begin{equation*}
\begin{aligned}
\frac{dL_{{\cal D}}^*(t)}{dt} =& -\frac{1}{n}\left(\text{sgn}(\boldsymbol{v}(t))\right)^\top\boldsymbol{H}\boldsymbol{v}(t) +\Psi(t),
\end{aligned}
\end{equation*}
i.e. \eqref{eq: 67}.
\end{proof}
\section{Proof of Proposition \ref{prop:1}}
\begin{proof}
We have 
\begin{align*}
    \frac{dL_S(t)}{dt} =&\; -\frac{1}{n}\left(\text{sgn}(\boldsymbol{v}(t))\right)^\top\boldsymbol{H} \boldsymbol{v}(t)\\
    =&\;-\frac{1}{n}\left(\text{sgn}(\boldsymbol{v}(t))\right)^\top\boldsymbol{H}\sum_{i=1}^\alpha \boldsymbol{w}_i e^{-\lambda_it}\\
    &\; -\frac{1}{n}\left(\text{sgn}(\boldsymbol{v}(t))\right)^\top\boldsymbol{H}\sum_{j=\alpha+1}^n \boldsymbol{w}_j e^{-\lambda_jt}\\
    =&\;-\frac{1}{n}\left(\text{sgn}(\boldsymbol{v}(t))\right)^\top\boldsymbol{H}\sum_{i=1}^\alpha \boldsymbol{w}_i e^{-\lambda_it}-\frac{1}{n}\Delta(t).
\end{align*}

Hence, using \eqref{eq: 67}, we have\\

$dL_{{\cal D}}^*(t)/dt=\frac{1}{n}dL_S(t)/dt+\Psi(t)<0$ iff:
\begin{align*}
    -&\sum_{i=1}^\alpha \left(\text{sgn}(\boldsymbol{v}(t))\right)^\top\boldsymbol{H}\boldsymbol{w}_i e^{-\lambda_it}-\Delta(t)+n \Psi(t)<0\\
\intertext{i.e.:}
    -&\sum_{i=1}^\alpha \left(\text{sgn}(\boldsymbol{v}(t))\right)^\top\boldsymbol{H}\boldsymbol{w}_i e^{-\lambda_it}+\Omega(t)<0\\
\intertext{i.e.:}
    &\sum_{i\in [\alpha]}\Gamma_i(t)e^{-\lambda_i t} - \Omega(t)>0\\
\intertext{i.e.:}
    &\;\Phi(t)> 0.
\end{align*}
Hence $dL_{{\cal D}}^*(t)/dt<0$ iff $\Phi(t)>0$,
i.e.: ~\eqref{eq:basic00}. 
From \eqref{eq:t+} and \eqref{eq:basic00}, it then follows: 
\begin{equation*} 
t^+=\sup_{s\in {\cal T}_1}\left\{s: \frac{dL_{{\cal D}}^*(s)}{dt}<0\,\ \forall t\in[0,s)\right\},
\end{equation*}
i.e. \eqref{eq:decrease0}.
On the other hand, $t^*$ is the least value $s\geq 0$ such that $dL_{{\cal D}}^*(s)/dt=0$
(see~\eqref{eq:t**}).  Hence
$t^+\leq t^*$,
i.e. \eqref{eq:21bis}.

There are now two cases:
Either~$t^*$~belongs  to ${\cal T}_1$, i.e. $t^*<\tau_1$ (case 1), or not, i.e. $t^*\geq \tau_1$ (case~2).

Case 1 ($t^*<\tau_1)$: Since $t^*$ is the first
time $s$ such that $dL_{{\cal D}}^*(s)/dt\geq 0$ (see \eqref{eq:t**}), we have
$dL_{{\cal D}}^*(s)/dt <0$ for all $s<t^*<\tau_1$. It  follows from~\eqref{eq:decrease0}: $t^*\leq t^+$. Since $t^+\leq t^*$
(see \eqref{eq:21bis}), we have: $t^+=t^*$, i.e.~\eqref{eq:case1}.

Case 2 ($t^*\geq\tau_1$): 
Since $t^*$ is the first
time $s$ such that $dL_{{\cal D}}^*(s)/dt\geq 0$ (see \eqref{eq:t**}) and $t^*\geq \tau_1$, we have
$dL_{{\cal D}}^*(s)/dt<0$ for all $s<\tau_1$.
It follows from~\eqref{eq:decrease0}: $\tau_1\leq t^+$.
Besides,   as a supremum of a set of elements~$<\tau_1$, $t^+$ satisfies: $t^+\leq \tau_1$. So $t^+=\tau_1$, i.e.
\eqref{eq:case2}.

Suppose furthermore \eqref{eq:spec}.
Then
$\Gamma(t)=\Sigma_{i\in[\alpha]}\Gamma_i(t)=\Sigma_{i\in{\cal I}^+}\Gamma_i(t)$, with $\Gamma_i(t)> 0$ for all $i\in[\alpha] $.
Then, using \eqref{eq:CS3}, we have 
$$\Gamma(t)e^{-\lambda_1 t}-\Omega(t)\leq\Phi(t)\leq \Gamma(t)e^{-\lambda_\alpha t}-\Omega(t).$$
Using~\eqref{eq:t+}:
\begin{equation*} 
t^+=\sup_{s\in {\cal T}_1}\left\{s: \Phi(t)>0,\ \forall t\in[0,s)\right\},
\end{equation*}
it  follows:
\begin{equation*}
\sup_{s\in {\cal T}_1}\left\{s: \Gamma(t)e^{-\lambda_1 t}-\Omega(t)>0,\ \forall t\in[0,s)\right\}\leq t^+\leq \sup_{s\in {\cal T}_1}\left\{s: \Gamma(t)e^{-\lambda_\alpha t}-\Omega(t)>0,\ \forall t\in[0,s)\right\},
\end{equation*}
i.e.:
$$\sup_{s\in {\cal T}_1}\left\{s: t<\frac{1}{\lambda_1}\ln\frac{\Gamma(t)}{\Omega(t)},\ \forall t\in[0,s)\right\}\leq t^+\leq \sup_{s\in {\cal T}_1}\left\{s: t<\frac{1}{\lambda_\alpha}\ln\frac{\Gamma(t)}{\Omega(t)},\ \forall t\in[0,s)\right\},$$
i.e.:
$$t^+_1\leq t^+\leq t^+_\alpha,$$
i.e. \eqref{eq:encadrement}.
    \end{proof}
\newpage

\vskip 0.2in
\bibliography{sample}

\end{document}